\newcommand{\M}[1]{\mathit{#1}}
\newcommand{\fin}{\mbox{\small \sf fin}}
\newcommand{\meth}[1]{\llbracket #1 \rrbracket}
\newcommand{\tnet}[1]{[ #1 ]}
\tikzset{global scale/.style={
    scale=#1,
    every node/.style={scale=#1}
  }
}
\newcommand{\under}[1]{\ul{#1}}
\newcommand{\defterm}[1]{\under{\textit{#1}}}
\theoremstyle{definition}
\newtheorem{definition}{Definition} 
\newtheorem{proposition}{Proposition} 
\newtheorem{theorem}{Theorem} 
\newcommand{\namesmath}[1]{\mbox{\textit{#1}}}
\newcommand{\propername}[1]{\text{#1}}
\newcommand{\INDIGOLOG}{\propername{IndiGolog}}
\newcommand{\AgentSpeak}{\propername{AgentSpeak}}
\newcommand{\CAN}{\propername{CAN}}
\newcommand{\HTN}{HTN}
\newcommand{\REAP}{REAP}
\newcommand{\BDI}{BDI}
\newcommand{\CANPLAN}{\propername{CANPlan}}
\newcommand{\JACK}{\propername{JACK}}
\newcommand{\PRS}{\propername{PRS}}
\newcommand{\JSHOP}{\propername{JSHOP}}
\newcommand{\PROPICE}{\propername{Propice-Plan}}
\newcommand{\RETSINA}{\propername{RETSINA}}
\newcommand{\refn}{\namesmath{evo}}
\newcommand{\sol}{\namesmath{sol}}
\newcommand{\comp}{\namesmath{comp}}
\newcommand{\redd}{\namesmath{red}}
\newcommand{\Op}{\namesmath{Op}}
\newcommand{\Me}{\namesmath{Me}}
\newcommand{\move}{\namesmath{mv}}
\newcommand{\estabConn}{\namesmath{estabCon}}
\newcommand{\breakConn}{\namesmath{breakCon}}
\newcommand{\first}{\normalsize{\namesmath{fst}}}
\newcommand{\last}{\normalsize{\namesmath{lst}}}
\newcommand{\true}{\namesmath{true}}
\newcommand{\transmitData}{\namesmath{transDS}}
\newcommand{\sendData}{\namesmath{sendData}}
\newcommand{\extData}{\namesmath{tagData}}
\newcommand{\raw}{\namesmath{raw}}
\newcommand{\monitor}{\namesmath{procImg}}
\newcommand{\sendExtData}{\namesmath{sendTagData}}
\newcommand{\uploadData}{\namesmath{loadDS}}
\newcommand{\calibrate}{\namesmath{calib}}
\newcommand{\heat}{\namesmath{mvC}}
\newcommand{\navigate}{\namesmath{nav}}
\newcommand{\lander}{\namesmath{landr}}
\newcommand{\didExp}{\namesmath{didExp}}
\newcommand{\cali}{\namesmath{cal}}
\newcommand{\charge}{\namesmath{charge}}
\newcommand{\stormy}{\namesmath{lowBat}}
\newcommand{\storm}{\namesmath{isStormy}}
\newcommand{\stormact}{\namesmath{stormy}}
\newcommand{\connEst}{\namesmath{connEst}}
\newcommand{\camMoved}{\namesmath{camMoved}}
\newcommand{\nop}{\namesmath{nop}}
\newcommand{\DD}{\mbox{$\mathbb{D}$}}
\newcommand{\TT}{\mbox{$\mathcal{T}$}}
\newcommand{\TTT}{\mbox{${\mathcal{\overline{T}}}$}}
\newcommand{\II}{\mbox{$\mathcal{I}$}}
\newcommand{\tuple}[1]{\langle #1 \rangle}
\newcommand{\config}[1]{\M{\langle #1 \rangle}}
\def\planeaux!#1:#2<-#3!{\M{#1 \mbox{\rm:} #2\; \leftarrow #3}}
\def\planeVaux!#1[#2]:#3<-#4!{#1(\vec{#2}) \mbox{\rm:} #3\; \leftarrow #4}
\def\planeaux!#1<-#2!{{#1 \leftarrow #2}}
\newcommand{\Plan}{\mbox{\sf Plan}}
\begin{document}

\title{Addendum to ``HTN Acting: A Formalism and an Algorithm''}
\author{Lavindra de Silva}
\affil{Department of Engineering, \protect\\  University of Cambridge, UK \protect\\ Lavindra.deSilva@eng.cam.ac.uk}
\date{}

\maketitle

\begin{abstract}
Hierarchical Task Network (\HTN) planning is a practical and efficient approach to planning when the `standard operating procedures' for a domain are available. Like Belief-Desire-Intention (\BDI) agent reasoning, \HTN\ planning performs hierarchical and context-based refinement of goals into subgoals and basic actions.
However, while \HTN\ planners `lookahead' over the consequences of choosing one refinement over another, \BDI\ agents interleave refinement with acting.
There has been renewed interest in making \HTN\ planners behave more like \BDI\ agent systems, e.g. to have a unified representation for acting and planning. However, past work on the subject has remained informal or implementation-focused.
This paper is a formal account of `\HTN\ acting', which supports interleaved deliberation, acting, and failure recovery. 
We use the syntax of the most general \HTN\ planning formalism and build on its core semantics, and we provide an algorithm which combines our new formalism with the processing of exogenous events. We also study the properties of \HTN\ acting and its relation to \HTN\ planning.
\end{abstract}


\section{Introduction}

Hierarchical Task Network (\HTN) planning 
\cite{ErolHN:AMAI96,Nau:IJCAI99-SHOP,Nau:IJCAI01-SHOP2,GhallabNT:04-Planning} 
is a practical and efficient approach to
planning when the `standard operating procedures' for a domain 
are available. 
\HTN\ planning is similar to
Belief-Desire-Intention (\BDI) \cite{Rao:KR91,Rao:LNCS96-AgentSpeak,Hindriks1999,Winikoff.etal:KR02-CAN} agent
reasoning in that both approaches perform hierarchical 
and context-based refinement of goals into subgoals 
and basic actions 
\cite{SardinaDP:AAMAS06,SardinaPadgham:JAAMAS10}.
However, while \HTN\ planners `lookahead' 
over the consequences of choosing one refinement 
over another before suggesting an action, 
\BDI\ agents interleave refinement 
with acting in the environment. Thus, while the former 
approach can guarantee goal achievability (if there 
is no action failure or environmental interference), 
the latter approach is able to quickly respond to environmental 
changes and exogenous events, and recover from failure.
This paper presents a formal semantics that builds on 
the core \HTN\ semantics in order to enable such response 
and recovery.

One motivation for our work is a recent drive toward adapting 
the  languages and algorithms used in Automated 
Planning to build a framework for `refinement acting' 
\cite{Ghallab:2016}, i.e., deciding how to carry out 
a chosen recipe of action to achieve some objective, 
while dealing with environmental changes, events, and 
failures. To this end, \cite{Ghallab:2016} proposes the 
Refinement Acting Engine (RAE), an \HTN-like framework 
 with continual online processing and recipe repair 
in the case of runtime failure. A key consideration 
in the RAE is a unified hierarchical representation 
and a core semantics that suits the needs of both acting 
and lookahead.
We are also motivated by recent work \cite{deSilva:2017} 
which suggests that a fragment of the recipe language of \HTN\ 
planning does not have a direct (nor known) translation to the 
recipe languages of typical \BDI\ agent programming 
languages such as \AgentSpeak\ \cite{Rao:LNCS96-AgentSpeak} 
and \CAN\ \cite{Winikoff.etal:KR02-CAN}. For example, HTNs 
allow a flexible specification of how steps in a recipe 
should be interleaved, whereas steps in \CAN\ recipes must 
be sequential or interleaved in a `series-parallel' 
\cite{Valdes:1979} manner.

There have already been some efforts toward adapting \HTN\ 
planning systems to make them behave more like \BDI\ agent systems.
Perhaps the first of these efforts was the \RETSINA\ architecture 
\cite{Sycara:2003},
which used an \HTN\ language and semantics for representing 
recipes and refining tasks, but also interleaved task
refinement with acting in the environment. \RETSINA\ is an implemented 
architecture which has been used in a range of real-world 
applications.
In \cite{deSilvaP:AAI04}, the \JSHOP\ \cite{Nau:IJCAI99-SHOP} 
\HTN\ planner is modified in two ways: \textit{(i)} 
to execute a solution (comprising a sequence of actions) found via lookahead, and then 
re-plan if the solution is no longer viable 
in the real world (due to a change in the environment), and 
\textit{(ii)} to 
immediately execute the chosen refinement for a task, instead 
of first performing lookahead 
to check whether the 
refinement will accomplish the task. The latter 
modification made \JSHOP\ as effective as the 
industry-strength 
\JACK\ \BDI\ agent framework \cite{Winikoff2005}, in terms of 
responsiveness to environmental changes. 

However, both \RETSINA\ and the \JSHOP\ variant lack a formalism,
making it difficult to study the properties (e.g. correctness) of 
their semantics, and to compare them to other 
similar systems. The same applies to the algorithms and abstract syntax 
of the RAE framework, which are presented only in pseudocode.

There is also some work on making \BDI-like agent systems 
behave more like \HTN\ planning systems. In particular, 
both the \REAP\ algorithm in \cite{Ghallab:2016} and the 
\CANPLAN\ \cite{SardinaDP:AAMAS06,SardinaPadgham:JAAMAS10} \BDI\ agent programming 
language (and its extensions such as \cite{Bauters:14,deSilva:2007}) can make informed decisions about  refinement choices
by using a lookahead capability.
Similarly, there are 
 agent programming
languages and systems that support some form of planning  
(though not \HTN-style planning) 
\cite{MeneguzziSilva:KER16}, such as the \PRS\ \cite{Georgeff:IJCAI89-PRS}
based 
\PROPICE\ \cite{DespouysI:ECP99} system and the situation-calculus 
based \INDIGOLOG\ \cite{DGLLS:APLBOOK09-IndiGolog} system.
Finally, there are also some interesting extensions
to \HTN\ and \HTN-like planning 
\cite{Geier:2011,Alford:BA15,Kambhampati:AAAI98,Xiao:17,Shivashankar.etal:IJCAI13-GoDeL,Alford:16}, 
e.g. approaches that combine classical and \HTN\ planning.
In contrast, our work is not concerned with lookahead 
or planning, but with adapting the \HTN\ 
planning semantics to enable \BDI-style behaviour.

Thus, our contribution is a formal account of \HTN\ 
\emph{acting}, which supports interleaved deliberation, 
acting, and recovery from failure, e.g. due to environmental 
changes. To this end, we use the syntax of the
most general \HTN\ planning formalism \cite{ErolHN:AMAI96,Erol:1994}, and we build on its core semantics by developing three main definitions: execution via reduction, 
action, and replacement. We then provide an algorithm for 
\HTN\ acting which combines our new formalism with the 
 processing of exogenous events. We also study the properties of \HTN\ acting, particularly in relation to \HTN\ planning.

\newcommand{\res}{\mbox{\small \sf res}}
\newcommand{\resu}{\mbox{\small \sf result}}
\newcommand{\rel}{\mbox{\small \sf bef}}
\newcommand{\exec}{\mbox{\small \sf exec}}
\newcommand{\fst}{\mbox{\small \sf primary}}
\newcommand{\suc}{\mbox{\small \sf succ}}
\newcommand{\pre}{\mbox{\small \sf pre}}
\newcommand{\fail}{\mbox{\small \sf fail}}
\newcommand{\add}{\mbox{\small \sf add}}
\newcommand{\del}{\mbox{\small \sf del}}
\newcommand{\relv}{\mbox{\small \sf rel}}
\newcommand{\rep}{\mbox{\small \sf rep}}
\newcommand{\upd}{\mbox{\small \sf upd}}
\renewcommand{\redd}{\mbox{\small \sf red}}
\newcommand{\lab}{\mbox{\small \sf lab}}
\newcommand{\smallest}{\mbox{\small \sf smallest}}
\newcommand{\blocked}{\mbox{\small \sf blocked}}
\newcommand{\act}{\mbox{\small \sf act}}
\renewcommand{\sol}{\mbox{\small \sf sol}}
\renewcommand{\comp}{\mbox{\small \sf comp}}
\renewcommand{\refn}{\mbox{\small \sf evo}}
\newcommand{\topp}{\mbox{\small \sf top}}

\renewcommand{\vec}[1]{\mathbf{#1}}
\renewcommand{\bar}[1]{\underline{#1}}

\renewcommand{\algorithmicrequire}{\textbf{Input:}}
\newcommand{\leftarrowtwo}{:=}
\renewcommand{\algorithmicensure}{\textbf{Output:}}
\newcommand{\Exec}{\mbox{\textsf{Sense-Reason-Act}}}

\section{Background: \HTN\ Planning}
\label{sec:back-htn}

In this section we provide the necessary background
material on \HTN\ planning. 
Some definitions are given only informally; we refer the 
reader to \cite{ErolHN:AMAI96,Erol:1994} for the formal 
definitions.

An \HTN\ \defterm{planning problem} is a tuple
$\tuple{d,\II,\DD}$ comprising a \emph{task network} 
$d$, an initial \defterm{state} $\II$, which is a set of ground
atoms, and a 
\defterm{domain} $\DD = \tuple{\Op,\Me}$, where $\Me$ 
is a set of reduction \emph{methods} and $\Op$ is a 
set of STRIPS-like operators. \HTN\ planning involves
iteratively decomposing/reducing the `abstract tasks' occurring
in $d$ and the resulting task networks by using methods 
in $\Me$, until only STRIPS-like actions remain that 
can be ordered and executed from $\II$ relative to $\Op$.

A \defterm{task network} $d$ is a couple
$\tnet{S,\phi}$, where $\phi$ is a \emph{constraint 
formula}, and $S$ is a non-empty set of \emph{labelled 
tasks}, i.e., constructs of the form $(n:t)$;  
element $n$ is a \defterm{task label}, which is a 0-ary 
task-label symbol (in FOL) that is unique in $d$ and $\DD$, 
and $t$ is a \emph{non-primitive} or \emph{primitive} \defterm{task}, which is an n-ary 
task symbol whose arguments are function-free terms. 
The \defterm{constraint formula} $\phi$ is a
Boolean formula built from negation, disjunction,
conjunction, and constraints, each of which is either:
%
an \defterm{ordering constraint} of the form $(n \prec n')$,
which requires the task (corresponding to label) $n$ to 
precede task $n'$;
a \defterm{before} (resp. an \defterm{after}) \defterm{state-constraint} 
of the form $(l,n)$ (resp. $(n,l)$), which requires literal 
$l$ to hold in the state just before (resp. after) doing
 $n$; 
a \defterm{between state-constraint} of the form $(n,l,n')$, 
which requires  $l$ to hold in all states between 
doing  $n$ and $n'$; or 
%
%
a \defterm{variable binding constraint} of the form $(o = o')$, which requires
$o$ and $o'$ to be equal, each of which
is a variable or constant. 
We ignore variable binding constraints as they can be specified as state-constraints, using the binary logical symbol `='.

Instead of specifying a task label, 
a constraint may also refer, using
expression $\first[S]$ or $\last[S]$,
to the action that is eventually ordered to occur first or last (respectively) among those that are yielded by the set of task labels $S$. While these expressions can be `inserted' into a constraint when a task is reduced, we assume that they do not occur in methods.

A primitive task, or \emph{action}, $t$, has exactly one \emph{relevant operator} in $\Op$, i.e., one operator associated with a primitive task $t'$ that has the same task symbol and arity as $t$; any variable appearing in the operator  also appears in $t'$ and its precondition. Given a primitive task $t$, we denote its precondition, add-list and delete-list relative to $\Op$ as $\pre(t,\Op), \add(t,\Op)$ and $\del(t,\Op)$, respectively. 
A non-primitive task can have one or more 
\emph{relevant  methods} in $\Me$. A 
\defterm{method} is a couple $\meth{t(\vec{v}), d}$, where 
$t(\vec{v})$ is a non-primitive task, the arguments $\vec{v}$ 
are distinct variables,\footnote{While \cite{ErolHN:AMAI96}
does allow this vector to contain constants, we instead specify 
such binding requirements in the constraint formula.}
and $d$ is a task network.

Given an \HTN\ planning problem 
$\tuple{d = \tnet{S_d,\phi_d},\II,\tuple{\Op,\Me}}$,  
the core  planning steps involve selecting a 
relevant method $m=\meth{t_m,d_m} \in \Me$ 
for some non-primitive task $(n:t) \in S_d$ and 
then reducing the task to yield a 
`less abstract' task network. 
Reducing $(n:t)$ 
with $m$ involves replacing $(n:t)$ with the tasks in 
$S_m$ (where $d_m = \tnet{S_m,\phi_m}$) and updating 
$\phi_d$, e.g. to include the constraints in $\phi_m$;
formal definitions for method relevance and reduction are 
given in Section \ref{sec:prelim}. The set of reductions 
of $d$ is denoted  $\redd^*(d,\tuple{\Op,\Me})$.

If all  non-primitive tasks in the initial 
and subsequent task networks have been reduced,
a \emph{completion}  is obtained
from the resulting `primitive' task network.
Informally, $\sigma$ is a completion of a primitive task
network $d=\tnet{S,\phi}$ at a state $\II$, denoted $\sigma \in comp(d,\II,\DD)$, if $\sigma$ is a 
total ordering of a ground instance of $d$ 
that satisfies $\phi$; 
if $d$ mentions a non-primitive task,  
then $comp(d,\II,\DD) = \emptyset$.

Finally, the set of all \HTN\ 
\defterm{solutions} is defined as
$sol(d,\II,\DD) = \bigcup_{n < \omega} 
sol_n(d,\II,\DD)$, where $sol_n(d,\II,\DD)$ 
is defined inductively as 
{
\begin{eqnarray*}
sol_1(d,\II,\DD)        & = & comp(d,\II,\DD), \\
sol_{n+1}(d,\II,\DD) 	& = &
                       	sol_n(d,\II,\DD) \cup \hspace*{-0.5cm}
                       	\bigcup_{d' \in {\scriptsize \textsf{red}^*}(d, \scriptsize \DD)} \hspace*{-0.5cm} sol_n(d',\II,\DD).
\end{eqnarray*}
}

In words, the \HTN\ solutions for a given
planning problem is the set of all completions 
of all primitive task networks that can be 
obtained via zero or more reductions of the 
initial task network.

\subsection*{A Running Example}

Let us consider the example of
a rover agent exploring the surface of mars. A part 
of the rover's HTN domain is illustrated in Figure 
\ref{fig:ex} (with  braces omitted in $\first[]$ 
and $\last[]$ expressions). The top-level non-primitive task 
is to transfer, to the lander, previously gathered
soil analysis data from a location $X$,
and if possible to also deliver
the soil sample for further analysis inside the lander.
 
The top-level task is achieved using either method 
$m_1$ or $m_2$, both of which require the data and 
sample from  $X$ to be available (i.e., for 
$\didExp(X)$ to hold).
If the rover is low on battery charge ($\stormy$), 
$m_1$ is used. This transmits the soil data but it does 
not deliver the soil sample, which may result in losing
it if it is later discarded to make room for other samples.
Method $m_1$ prescribes establishing radio communication
with the lander, sending it the data by first including 
metadata, and then breaking 
the connection,  while checking continuously that the 
connection is not lost between the first and last 
tasks (including those of $m_3$).
If the rover is not low on battery charge, 
$m_2$ is used to achieve the top-level task; $m_2$ 
prescribes navigating to a lander $L$ and then 
uploading and depositing the soil data and sample, 
respectively.

\begin{figure}
\resizebox{0.912\columnwidth}{!}{
\begin{tabular}{c}
\begin{tikzpicture}[]

  \tikzstyle{every node}=[draw=black,thick]
  \tikzstyle{edge from parent}=[black,thick,draw,->]

 \tikzstyle{ptask}=[dotted]
  \tikzstyle{method}=[rounded corners, fill=black!15]
  \tikzstyle{photon} = [snake=snake, draw=red]
  \tikzstyle{cross} = [cross out,draw=blue,very thick,minimum size=0.8cm]
  \tikzstyle{lab} = [draw, ellipse, node distance=3cm, minimum height=2em]

  
  \node[label={[xshift=-3mm]below:\textbf{or}}] (top) {$\transmitData(X)$}
                    child[sibling distance=6.7cm,level distance=1.2cm] {node[method] {$m_1$}            
                        child[sibling distance=2.3cm,level distance=0.85cm]{node[ptask] (l1) {1:$\estabConn$}}
                        child[sibling distance=2.4cm,level distance=0.85cm]{node {2:$\sendData(X)$}
                                child[level distance=0.85cm]{node[method] {$m_3$}            
                                    child[level distance=0.85cm,sibling distance=2.6cm] {node[ptask] {4:$\extData(X)$}}
                                    child[level distance=0.85cm,sibling distance=2.8cm] {node[ptask] {5:$\sendExtData(X)$}}
                                }
                        }
                        child[sibling distance=2.3cm,level distance=0.8cm] {node[ptask] {3:$\breakConn$}}
                    }
                    child[level distance=3.85cm,sibling distance=0cm]{node[method] {$m_2$}            
                        child[sibling distance=2.55cm,level distance=0.8cm]{node[label={[xshift=-0mm]below:\textbf{or}}]{6:$\navigate(L)$}
					        child[sibling distance=4cm,level distance=0.8cm] {node[method] {$m_4$}
					    	    child[sibling distance=1.3cm,level distance=0.8cm] {node(cal)[ptask] {8:$\calibrate$}}
					    	    child[sibling distance=1.4cm,level distance=0.8cm] {node[ptask] {9:$\heat$}}
					    	    child[sibling distance=1.5cm,level distance=0.8cm] {node[ptask] {10:$\move(L)$}}
					        }
                            child[sibling distance=4cm,level distance=0.8cm] {node[method] {$m_5$}
                                child[sibling distance=1.6cm,level distance=0.8cm] {node[ptask] {11:$\heat$}}
                                child[sibling distance=1.6cm,level distance=0.8cm] {node[ptask] {12:$\move(L)$}}
                            }  
				        }
				        child[sibling distance=2.2cm,level distance=0.8cm] {node[ptask] {7:$\uploadData(X)$}}
				    };
%
%
%
 \node [above left of=top,shift={(-5cm,-0.7cm)}]{Task};
 \node [method,above left of=top,shift={(-5cm,-1.25cm)}]{Method};
 \node [ptask,above left of=top,shift={(-5cm,-1.8cm)}]{Action};
 \node [below left of=cal,shift={(3.8cm,-1.8cm)}]
       [text centered,rounded corners]
        { \begin{tabular}{ccc}
 		$\phi_1$                & $\phi_2$                          & $\phi_3$    \\ \hline     
        $(\stormy,1)$        & $(\lander(L),6)$                  & $(4 \prec 5)$           \\          
        $(1, \connEst, 3)$ &   $(\didExp(X),6)$                &                           \\          
        $(\didExp(X),1)$        &   $(6 \prec 7)$                &                           \\          
        $(1 \prec 2), (2 \prec 3)$           &         &                           \\    \hline     
        $\phi_4$                          & $\phi_5$              &\\ \hline
        $(\neg \cali,\first[8,9])$        & $(\cali,11)$          &\\
        $(\neg \stormy,\first[8,9])$      & $(\neg \stormy,11)$   &\\
        $(\last[8,9] \prec 10)$           & $(11 \prec 12)$       &\\
 	  \end{tabular}
  	};

\end{tikzpicture}
\end{tabular}
}
\caption{A partial  domain for a simple rover. 
The  tasks and methods are shown at the top, and the constraint formulas of methods are shown in the table. 
Each $m_i$ is of the form $\meth{t_i,d_i=\tnet{S_i,\phi_i}}$.
We use expressions $\first[]$ and $\last[]$ only for readability; 
it is straightforward to replace their associated constraints with those
that do not contain such expressions.
}\label{fig:ex}
\end{figure}
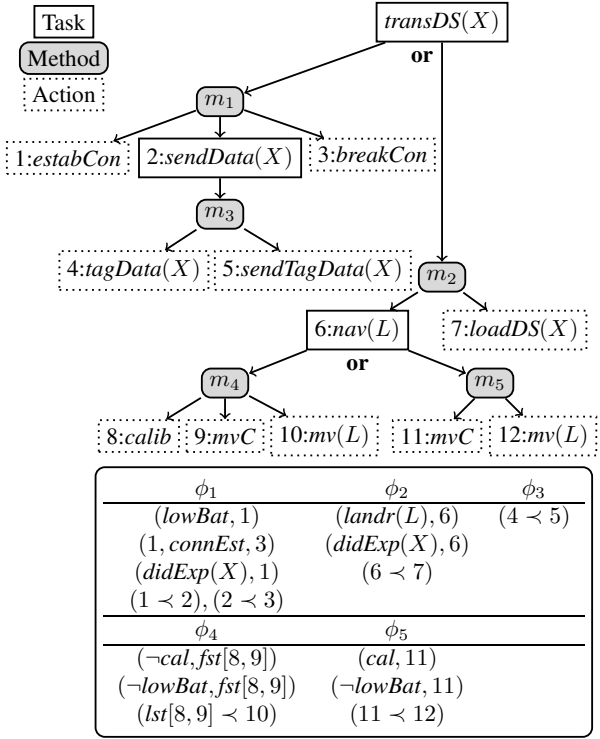

Navigation is performed  using $m_4$ or $m_5$. Method 
$m_4$ prescribes calibrating the onboard instruments, moving 
the cameras to point straight (which asserts $\camMoved$), 
and moving to the lander; while the first two actions can 
happen in any order, the third must happen last. 
The method requires that the instruments are not 
currently calibrated ($\neg \cali$) and the battery charge is not 
low. 
Method $m_5$ is similar except that it is used only
if the instruments are already calibrated, for example
due to a recent calibration to achieve another task.

Action $\move$ requires $\neg \stormy$ 
to hold, and it consumes
a significant amount of  charge, i.e., it asserts 
$\stormy$.\footnote{For simplicity, we assume 
`low charge' is less than or equal to 50\% of 
the maximum charge,
and an action requiring a `significant' 
amount of charge consumes 50\%. We also consider it
unsafe for the charge to reach 0\%.}
Action $\monitor$ (not shown) requires $\raw$ and 
$\neg \stormy$ to hold and asserts $\neg \raw$ and $\stormy$;
the action processes and compresses new 
raw images (if any exist, i.e., $\raw$ holds) 
of the martian surface that were taken by 
the cameras. Doing $\monitor$ infrequently 
may result in losing older images, if they are 
overwritten to make space on the storage 
device.\footnote{We assume that 
delivering a soil sample to the lander and processing images 
before they are overwritten have equal importance.}
The other actions consume a negligible amount 
of charge, and action $\charge$ (not shown) makes
the battery fully charged.

\section{Preliminaries and Assumptions}
\label{sec:prelim}

In this section we formally define the notion of reduction, 
and we state the remaining assumptions. 

First, we separate  the notion of method relevance from the notion of reduction in \cite{ErolHN:AMAI96}. In what follows, we use the standard notion 
of \emph{substitution} \cite{Lloyd:BOOK87-LogicProgramming}, and of  \emph{applying} a substitution 
$\theta$ to an expression $E$, which we
denote by $E\theta$.

\begin{definition}[Relevant Method]\label{def:relmeth}
Let $\DD=\tuple{\Op,\Me}$ be a domain, $t$ a non-primitive
task, and $\meth{t',d} \in \Me$ a method. If $t = t'\theta$
for some substitution $\theta$, then
$d\theta$ is a \defterm{relevant}  method-body for $t$ 
relative to $\DD$.\footnote{All variables and task 
labels in $d\theta$ must be renamed with variables
and task labels that do not appear anywhere else.}
The set of all such 
method-bodies is denoted by $\relv(t,\DD)$.
\end{definition}

In the definition of reduction below, and in the rest 
of the paper, we denote by $\lab(S)$ the set of all 
task labels appearing in a given set of labelled tasks $S$.

\begin{definition}[Reduction (adapted from 
\cite{ErolHN:AMAI96})] \label{def:red}
%
Let $d = \tnet{\{(n:t)\} \cup S, \phi}$,
with $(n:t) \not\in S$, be a task network
and $t$ a non-primitive task, and let
$d' = \tnet{S',\phi'} \in \relv(t,\DD)$.
%
The \defterm{reduction} of $n$ in $d$
with $d'$, denoted $\redd(d,n,d')$, is the task network
$\tnet{S \cup S', \phi' \land \psi}$, 
where $\psi$ is obtained from $\phi$ with the following
modifications:
\begin{itemize}

\item replace $(n \prec n_j)$ with $(\last[\lab(S')]
      \prec n_j),$ as $n_j$ must come after every task in $n$'s
      decomposition;

\item replace $(n_j \prec n)$ with $(n_j \prec \first[\lab(S')])$;

\item replace $(l,n)$ with $(l, \first[\lab(S')]),$
      as $l$ must be true immediately before the first task
      in $n$'s decomposition;

\item replace $(n,l)$ with $(\last[\lab(S')],l),$ as $l$
      must be true immediately after the last task in $n$'s
      decomposition;

\item replace $(n,l,n_j)$ with $(\last[\lab(S')],l,n_j)$; 

\item replace $(n_j,l,n)$ with $(n_j,l,\first[\lab(S')])$; and

\item everywhere that $n$ appears in $\phi$ in a $\first[]$
      or a $\last[]$ expression, replace it with $\lab(S')$.
%
\end{itemize}
\end{definition}

For example, consider  task network $d=\tnet{S,\phi}$,
where $S=\{(A:\transmitData(loc1)),(B:\charge)\}$
and  $\phi = (A \prec B)$. Observe that method
$m_2$ in Figure \ref{fig:ex} is 
$\meth{\transmitData(X),d_2=\tnet{S_2,\phi_2}}$, where $S_2=\{(6:\navigate(L)), (7:\uploadData(X))\}$ and $\phi_2 = (\lander(L),6) \land (\didExp(X),6) \land
(6 \prec 7)$. Then, 
$\redd(d,A,d_2)$ is  task
network $\tnet{S',\phi'}$ where $S' =
\{
(6:\navigate(L)), (7:\uploadData(loc1)), (B:\charge)
\}$
and  $\phi'$ is the conjunction of $\phi_2$
and $\phi$ updated to account for the
reduction, i.e.,
$\phi' = (\lander(L),6) \land (\didExp(loc1),6) \land
(6 \prec 7) \land (\last[6,7] \prec B)$. 

In the rest of the paper, we ignore the $\charge$ 
task, and when we need to refer to a labelled 
task we simply use its task label if the corresponding
task is obvious; e.g. we would  represent $S'$ above as 
$\{6,(7:\uploadData(loc1)),B\}$.

The remaining assumptions that we make are the
following. First,
without loss of generality \cite{deSilva:2017}, we assume that \HTN\
domains  are \emph{conjunctive}, i.e., they do not mention 
constraint formulas that specify a disjunction of
elements.
Thus, we sometimes treat a constraint formula as a set (of
possibly negated constraints).

\begin{definition}[Conjunctive HTNs \cite{deSilva:2017}]
\label{def:norm} A task network $\tnet{S,\phi}$ 
is \defterm{conjunctive} if its constraint formula $\phi$ 
is a conjunction of possibly negated constraints. A 
\defterm{domain} $\tuple{\Op,\Me}$ is \defterm{conjunctive} 
if the task network $d$ in every method $\meth{t,d} \in \Me$ 
is conjunctive.
\end{definition}

Second, to distinguish between reductions that are being
pursued at different levels of abstraction, we assume  
a reduction produces at least two tasks, i.e., any method 
$\meth{t,\tnet{S,\phi}} \in \Me$ is such that $|S|>1$. This
can be achieved using `no-op' actions, denoted $\nop$, if necessary,
which have `empty' preconditions and effects. 

Third,
for any method $\meth{t,\tnet{S,\phi}} \in \Me$, 
there exists a (possibly `no-op') task $(n:t) \in S$ such that
$(n' \prec n) \in \phi$ for
any $n' \in \lab(S) \setminus \{n\}$, and
%
$(n,l) \not\in \phi$ for any $l$. This will ensure that 
all the after state-constraints in $\phi$ are evaluated
by our semantics.

Finally, we assume that the user does not specify inconsistent
ordering constraints in a method's constraint formula, e.g.
the constraints $(1 \prec 2), (2 \prec 3),$ and 
$(3 \prec 1)$. Formally, 
let $\phi^*$ denote the transitive closure of 
a constraint formula $\phi$, i.e., the one
that is obtained from $\phi$ by adding the
constraint $(n_1 \prec n_{i+1})$ whenever 
$(n_1 \prec n_2), (n_2 \prec n_3), \ldots, 
(n_i \prec n_{i+1}) \in \phi$ holds for 
some $i>1$. Then, for
any method $\meth{t,\tnet{S,\phi}} \in \Me$,
there does not exist a pair 
$(n \prec n'), (n' \prec n) \in \phi^*$ nor
$(n \prec n'), \neg (n \prec n') \in \phi^*$.

\section{A Formalism for HTN Acting}

We now develop a formalism for \HTN\ acting by defining, 
in particular, three notions of execution: via \emph{reduction}, 
\emph{action}, and \emph{replacement}. The first notion is based 
on task reduction; the second notion defines what it means to execute an 
action in the \HTN\ setting, in particular, the gathering and evaluating of
 constraints  relevant to the action; and the third notion
represents failure handling, i.e., the replacement of `blocked' 
tasks by alternative ones.

We only allow a task occurring in a task
network to be executed via action or reduction
if it is a \emph{primary} task in the  network, 
i.e., there are no other tasks 
that \emph{must} precede it.
Formally, given a task network $d=\tnet{S,\phi}$,
we first define the following sets of tasks: 
\[
\begin{array}{ll}
S_1 = \{(n:t) \in S \ \ \mid & (x \prec x') \in \phi, n$ occurs in $x'\},\\ 
S_2 =  \{(n:t) \in S \ \ \mid & t\ \text{is an action
and either}\ \neg (n \prec x) \in \phi\ \text{or}\\ & \neg (\last[\{n\}] \prec x) \in \phi\}.
\end{array}
\]
That is, $S_1$ 
and $S_2$ contain the tasks that \emph{cannot}
be primary ones; the above action $n$ occurring in a 
negated ordering constraint cannot be a primary task because one
or more tasks (represented by $x$ above) must precede 
$n$.\footnote{This is provided none of the actions associated with $x$ have
already been executed. As we show later, in 
our semantics, such an execution will result in the (then `realised') constraint being 
removed.}
Then, we define the set of \defterm{primary} 
tasks of task network $d$ as 
$\fst(d) = S \setminus (S_1 \cup S_2)$.
For example, given task network $d_1$ in method $m_1$ 
in Figure \ref{fig:ex}, $\fst(d_1)=\{1\}$, and given
task network $d_4$ in method $m_4$, $\fst(d_4)=\{8,9\}$.

We can now define our first notion, an
\emph{execution via reduction} of a task network, as the
reduction of an arbitrary primary non-primitive
task via a relevant method. To enable trying alternative
reductions for the task if the one that was selected fails 
or is not applicable, we maintain the set of all relevant
methods for the task, and update the set as  alternative 
methods are tried. We use the term \emph{reduction couple}
to refer to a couple comprising two sets: \textit{(i)} 
the set representing the reductions being pursued for a 
task (and its subtasks), and \textit{(ii)} 
the set of current alternative method-bodies
for the task.
We use $R$ to denote the set of  reduction couples 
corresponding to the tasks reduced so far, where each 
couple is of the form $\tuple{S,D}$, with $S$ being
a set of labelled tasks, and $D$ a set of task networks.
While the initial value
of $R$ and 
how it can `evolve' will be made concrete via formal definitions, we shall for now illustrate these
with an example.

Let us consider the task network 
$\tnet{S, \phi=\true}$, where the set
$S=\{(A:\transmitData(loc1)),(B:\monitor)\}$;
the initial state
$\II = 
\{
\raw,
\cali, \didExp(loc1),\lander(lan1)
\}$;
the `initial' set of reduction couples $R=
\{ \tuple{ S, \emptyset} \}$; and the 
domain $\DD$ is as depicted in Figure \ref{fig:ex}. 
An execution via reduction of the task network
from $\II$ relative to $R$ and $\DD$ 
is the tuple 
$\tuple{\tnet{S',\phi'},\II,R',\DD}$, where 
$S' = 
\{
6,7,B
\}$,
formula $\phi'$ is $\phi_2$ in Figure \ref{fig:ex}
with variable $X$ substituted with $loc1$,
and the resulting set of reduction couples 
$R' = 
\{ 
\tuple{S',\emptyset},
\tuple{\{6,7\}, \{d_1\}}
\}$, where $d_1$ is the alternative method-body for $A$.
Moreover, an execution via reduction of $\tnet{S',\phi'}$ 
is  the tuple
$\tuple{\tnet{S'',\phi''},\II,R'',\DD}$, where 
$S'' = S''' \cup \{7,B\}$,
set
$S''' = \{11, 12\}$, formula $\phi''$ 
is the conjunction of $\phi_5$ and $\phi'$ 
updated to account for the reduction, and set
 $R''= 
\{
\tuple{S'',\emptyset},
\tuple{S''' \cup \{7\}, \{d_1\}},
\tuple{S''',\{d_4\}}
\}$. 

We call a 4-tuple of the form $\tuple{d,\II,R,\DD}$, 
as in the example above, a \defterm{configuration}. 
(For brevity, we omit the fifth element $\theta$,
representing the substitutions applied so far to 
variables appearing in $d$.)
Formally, we define an execution via reduction as follows.

\begin{definition}[Execution via Reduction]\label{def:exec-red}
Let $\DD$ be a domain; $\II$ a state; $d$ 
a task network with a non-primitive task
$(n:t) \in \fst(d)$; $R$ a set of reduction 
couples; 
$d_n = \tnet{S_n,\phi_n} \in D$ a 
method-body, with $D = \relv(t,\DD)$; 
and couple $r=\tuple{S_n, D \setminus \{d_n\}}$.
An \defterm{execution} via \defterm{reduction}
of $d$ from $\II$ relative to $R$ and $\DD$ 
is the configuration
$\tuple{\redd(d,n,d_n),\II,R' \cup \{r\},\DD}$, 
where $R'$ is $R$ with any occurrence of $(n:t)$ 
replaced by the elements in set $S_n$.
\end{definition}

We now define the second kind of execution: performing 
an action. In order to execute a (primary) action, it 
must be \emph{applicable}, i.e., its precondition and any 
constraints that are \emph{relevant} to the action must 
hold in the current state. Such constraints could have been 
(directly) specified on the action, `inherited' from one or more of 
the action's `ancestors', or `propagated' from 
an earlier action. We first define the 
notion of a relevant constraint; we ignore negated 
between state-constraints for brevity.%
\footnote{To account for a negated between state-constraint $\neg (n_1,l,n_2)$,
we check in every state between $n_1$ and $n_2$ whether 
$\neg l$ holds. If so, we remove the constraint from the
formula. If $\neg (n_1,l,n_2)$ exists when the first action 
of $n_2$ is executed, $\neg l$ is then relevant for it.}

\begin{definition}[Relevant Constraint]\label{def:rel}
Let $d=\tnet{S,\phi}$ be a task network with an 
action $(n:t) \in S$, and $c \in \phi$ 
a between state-constraint or a possibly negated 
before or after state-constraint. Let $c_2$ be the 
non-negated constraint corresponding to $c$. Then, $c$ is 
\defterm{relevant} for executing $n$ relative 
to $d$ if for some literal $l$: 
\begin{itemize}

\item
$c_2 \in \{ (l,n), (l,\first[\{n,\ldots\}]) \}$; or \\\\ for some $x'$ and $n' \not\in \lab(S)$,

\item
$c_2 \in \{ (n',l), 
            (\last[\emptyset],l), 
            (n',l,x'), 
            (\last[\emptyset],l,x') \}$.

\end{itemize}
\end{definition}

The set of  relevant constraints for executing
$n$ relative to $d$ is denoted by $\rel(n,d)$.
For example, if $d$ is the resulting task network after
 the two reductions in our running example,
the relevant constraints for $(11:\heat)$ in 
Figure \ref{fig:ex} is the set:
$\{
\big(\lander(L),\first[11,12]\big),
\big(\didExp (loc1), \first[11,12]\big),
(\neg \stormy, 11),\\ (\cali, 11)
\}$, where the first two constraints are 
`inherited' from $(6:\navigate(L))$.
In the above definition, $n'$ and $\last[\emptyset]$
represent an action that was already executed, whose 
associated after or between state-constraints have
been `propagated' to $n$.

We next define what it means to  `extract' 
the literals from a given set of state constraints.
Let us denote the subset of negated constraints 
as $\rel^-(n,d) = \{c \in \rel(n,d) \mid c$ is a
negated constraint$\}$, and the subset of positive ones as 
$\rel^+(n,d) = \rel(n,d) \setminus \rel^-(n,d)$.
Then, the set of \emph{extracted literals} is denoted
$\rel_l(n,d) = \{l \mid$ literal $l$ occurs in 
$c, c \in \rel^+(n,d)\} \cup \{\neg l \mid$ literal 
$l$ occurs in $c, c \in \rel^-(n,d)\}$.
We can now define what it means for an 
action to be applicable.

\begin{definition}[Applicability]\label{def:app}
Let $\DD=\tuple{\Op,\Me}$ be a domain, $\II$ a 
state, and $d=\tnet{S,\phi}$ a task network with an 
action $(n:t) \in S$ such that $n \in \fst(d)$.
Let $\Phi(n,d,\Op)$ denote  
the precondition and extracted literals, i.e.,
the formula $\pre(t,\Op) 
\land \bigwedge l \in \rel_l(n,d)$.
Then, $n$ is 
\defterm{applicable} in $\II$ relative to $d$ 
and $\Op$ if $\II \models \Phi(n,d,\Op)$. 
\end{definition}

Executing an applicable  action results in changes 
to both the current state and the current task network:
the action is removed from the network's set of tasks, 
and the action's `realised'
constraints, e.g. the relevant ones that do not need to be re-evaluated before executing 
other actions, are removed from the network's constraint formula. The constraints that do need to be re-evaluated are the between state-constraints that 
require literals to hold from the end of an action that 
was executed earlier, up to an action that is yet to be executed.
Formally, given a task network $d=\tnet{S,\phi}$ and  
an action $(n:t) \in S$, we denote by $C_1$ the 
\emph{realised ordering constraints} upon executing 
$n$ (relative to $d$), i.e., the set
\[
\begin{array}{rcl}
\hspace{-2mm}\big\{(x \prec x') \in \phi & \mid & \text{for some}\ x'\ \text{and}\ x \in \{n,\last[\{n\}]\}\big\}\ \cup\\
\hspace{-2mm}\big\{\neg (x' \prec x) \in \phi & \mid & \text{for some}\ x'\ \text{and}\ x \in \{n, \first[\{n,\ldots\}]\}\big\},
\end{array}
\]
where $x'$ represents an action(s) that is
yet to be executed. Notice that a negated 
ordering constraint is realised only if 
one or more (or all) of the actions
corresponding to $x'$ are executed after 
the first (or only) one corresponding to $x$.
Next, we denote by $C_2$ the \emph{realised state
constraints} upon executing 
$n$, i.e., the set obtained from $\rel(n,d)$
by removing any between state-constraint $(x,l,x')$ when $x' \neq n$
and $x' \neq \first[\{n,\ldots\}]$.
Then, we can define the set of \defterm{realised}
constraints upon executing $n$ relative to $d$ as $\fin(n,d) = C_1 \cup C_2$, 
and the result of executing an action
as follows.

\begin{definition}[Action Result]\label{def:actres}
Let $\Op$ be a set of operators, $\II$ a state, 
$d$ a task network, $R$ a set of reduction couples,
$(n:t) \in \fst(d)$ an action, and $\theta$ a substitution.
The \defterm{result} 
of executing $n$ from $\II$ relative to $d, \theta,
R$ and $\Op$, denoted $\res(n,\II,d,\theta,R,\Op)$, is 
 the tuple $\tuple{\tnet{S',\phi'},\II',R}\theta$, where 
\begin{itemize}

\item 
$S' = S \setminus \{(n:t)\}$, where $d=\tnet{S,\phi}$; 
\item 
$\II' = \big(\II 
\setminus \del(t\theta,\Op)\big) \cup \add(t\theta,\Op)$; and

\item
$\phi'$ is obtained from $\phi \setminus \fin(n,d)$
by removing all occurrences of $n$ within 
$\last[]$ expressions.\footnote{We also remove from $\phi'$
any (remaining) constraint of the form $(x,l,x')$ such that $n$ occurs 
in $x'$, i.e., a between state-constraint that holds trivially.}
\end{itemize} 
\end{definition}

Notice that the only possible update to $R$ is a 
substitution of one or more variables (we do not remove 
executed actions from reduction 
couples). Finally, we define an 
execution via action of a task network as the execution 
of (a ground instance of) an applicable primary  action in it.

\begin{definition}[Execution via Action]
\label{def:exec-act}
Let $\DD = \tuple{\Op,\Me}$ be a domain,
$\II$ a state, $R$ 
a set of reduction couples, and $d =
\tnet{S,\phi}$ a task network such that
$\II \models \psi$
for some $\theta$ and action $(n:t) \in \fst(d)$,
where $\psi = \Phi(n,d,\Op)\theta$ is a ground formula.
An \defterm{execution} via \defterm{action}
of $d$ from $\II$ relative to $R$ and $\DD$
is the configuration
$\tuple{d',\II',R',\DD}$, where
$\tuple{d',\II',R'} = \res(n,\II,d,\theta,R,\Op)$.
\end{definition}

Continuing with our running example, let
$\tuple{\tnet{S,\phi},\II,R,\DD}$, with
$S=\{11,12,7,B\}$, be the 
configuration resulting from the two reductions from 
before. Then, an execution via action of $d$ from 
$\II$ relative to $R$ and $\DD$ is the configuration 
$\tuple{\tnet{S',\phi'},\II',R',\DD}$, where 
$\II' = \II \cup \{\camMoved\}$;  set $S' =
\{(12:\move(lan1)),7,B\}$;  formula $\phi'$ 
is obtained from $\phi$ by removing all constraints except for $(\last[11,12] \prec 7)$, which is updated to $(\last[12] \prec 7)$;
and $R'$ is obtained from $R$ by applying  substitution $\{L/lan1\}$.

Observe that the applicability of a method
(relative to the current state) is not checked 
at the point that it is chosen to reduce a task, but
%
\emph{immediately} before executing (for the first time) an associated
primary action---which 
may be after performing further reductions and 
unordered actions. On the other hand, BDI agent 
programming languages such as \AgentSpeak\ 
and \CAN\ 
check the applicability of a relevant recipe
at \emph{some point} before (not necessarily 
just before) executing an associated 
primary action. Thus, in cases where the environment 
changes between checking the recipe's applicability 
and executing an associated primary action (for the first time), and makes the recipe 
no longer applicable, the action will still be 
executed (provided, of course, the action itself 
is applicable). Such behaviour is not permitted
by our semantics.

We  now define the final notion of execution: 
\emph{execution via replacement}, i.e., replacing 
the reductions being pursued for a task 
if 
they have become \emph{blocked}.
Intuitively, this happens when none of the primary 
actions in the pursued reductions are applicable, 
and none of the primary non-primitive tasks have 
a relevant method.

Formally, let $\DD=\tuple{\Op,\Me}$ be a domain, $\II$ 
a state, $d$ a task network, and $\tuple{S,D}$ a 
reduction couple with $S \cap \fst(d) \neq \emptyset$. 
Then,  set $S$ is \defterm{blocked} in 
$d$ from $\II$ relative to $\DD$, denoted $\blocked(S,d,\II,\DD)$, 
if for all $(n:t) \in S \cap \fst(d)$, either $t$ 
is an action and $\II \not\models \Phi(n,d,\Op)$, 
or $t$ is a non-primitive task and $\relv(t,\DD) = \emptyset$. 
Recall that $S$ represents the reductions that are being 
pursued for a particular task (and its subtasks). 


When such pursued reductions are blocked, they are 
replaced by an alternative relevant method-body 
 for the task.
In the definition below, we use the $\first[]$ 
and $\last[]$ constructs (if any) `inserted' into the 
constraint formula by the first reduction of the task 
(Definition \ref{def:red}). Recall that these constructs
represent the `inheritance' of the task's
associated constraints by its descendant tasks.

\begin{definition}[Replacement]\label{def:repl}
Let $d = \tnet{S_d, \phi_d}$ be a task network,
$\tuple{S,D}$ a reduction couple, 
and $d_{new} = 
\tnet{S_{new}, \phi_{new}} \in D$. 
%
%
The \defterm{replacement} of (the elements
of) $S$ in $S_d$ with $S_{new}$ relative to 
$d_{new}$ and $d$, denoted 
$\rep(S,d_{new},d)$, is the task network
%
\begin{eqnarray*}
\tnet{(S_d \setminus S') \cup S_{new}, \psi \land \phi_{new}},
\end{eqnarray*}
where $S' = S \cap S_d$, and $\psi$ is obtained from $\phi_d$ by \textit{(i)} 
replacing any occurrence of (all) the task labels in $\lab(S')$
---within a $\first[]$ or a $\last[]$ expression---with 
the labels in $\lab(S_{new})$, and then \textit{(ii)} 
removing any element mentioning a task label in $\lab(S)$. 
\end{definition}

After a replacement, we need to \emph{update} the 
set of reduction couples accordingly, by doing the 
same
replacement in all  relevant reduction couples. 
In the definition below, the set $S'$ and task network 
$d_{new}$ are as above.

\begin{definition}[Update]
Let $R$ be a set of reduction couples with 
$\tuple{S,D} \in R$, let $S' \subseteq S$, 
and $d_{new} \in D$. 
The \defterm{update} of $S'$ in $S$
with $S_{new}$ relative to $d_{new}$ and $R$,
denoted $\upd(S',S,d_{new},R)$, is the set obtained 
from $R'=\big(R \setminus \{\tuple{S,D}\}\big) 
\cup \{\tuple{S,D \setminus \{d_{new}\}}\}$ by 
replacing any couple $\tuple{S'' \supseteq S, D''}$ 
with $\tuple{(S'' \setminus S') \cup S_{new}, D''}$,
and then 
removing any couple that still mentions an element in $S'$.
\end{definition}

Finally, we combine the two definitions above to 
define the configuration that results from an execution 
via replacement. While we provide a general definition,
 for replacing \emph{any} task's blocked 
(pursued) reductions, one might instead want to, as in 
depth-first search, first replace a \emph{least abstract} task's blocked reductions.
That is, one might want to first consider the \emph{smallest 
replaceable} reduction couples. Formally, given a set 
of reduction couples $R$, a couple $\tuple{S,D} \in R$ is 
a \defterm{smallest replaceable} one in $R$, denoted 
$\tuple{S,D} \in \smallest(R)$, if $D \neq \emptyset$ and 
for each couple $\tuple{S',D'} \in R$, either 
\textit{(a)} $S' \supseteq S$;
\textit{(b)} $S' \subset S$ and $D' = \emptyset$; or
\textit{(c)} $S \cap S' = \emptyset$.

\begin{definition}[Execution via Replacement]\label{def:exec-rep}
Let $\DD$ be a domain, $\II$ a state, 
$d=\tnet{S_d,\phi_d}$ a task network, 
and $R$ a set of reduction couples with an  
$r=\tuple{S,\{d_{new},\ldots\}} \in R$ such that 
$\blocked(S,d,\II,\DD)$ holds.
An \defterm{execution} via \defterm{replacement}
of $d$ from $\II$ relative to $R$ and $\DD$ is the configuration
\begin{eqnarray*}
\tuple{\rep(S,d_{new},d),\II,\upd(S \cap S_d, S, d_{new},R),\DD}; 
\end{eqnarray*}
the replacement is \defterm{complete} if $S \subseteq S_d$
and \defterm{partial} otherwise, and a \defterm{jump} if
$r \not\in \smallest(R)$.
\end{definition}

A complete-replacement represents
the BDI-style searching of an achievement-goal's (i.e., a task's) set 
of relevant recipes in order to find one that is applicable, and a partial-replacement represents \BDI-style recovery from the failure 
to execute (or successfully execute) an action, e.g. due to an environmental change.
We illustrate these  notions  
of replacement with the following examples.

Continuing with our running example, let
$\tuple{\tnet{S,\phi},\II,R,\DD}$ be the 
configuration resulting from the two reductions
from before. Suppose however that the rover's 
instruments were not calibrated, i.e.,
$\II \not\models \cali$. Then, action $(11:\heat)$
is not applicable, and an execution via 
complete-replacement is performed on tasks 
in $\tnet{S,\phi}$ to obtain configuration 
$\tuple{\tnet{S',\phi'},\II,R',\DD}$, 
where $S' = S'' \cup \{7,B\}$;  set
$S'' = \{8, 9, 10\}$; formula 
$\phi'$ is the conjunction of $\phi_4$, and 
$\phi$ updated by, e.g. removing the 
constraints that were copied from $\phi_5$ and replacing constraint
$(\lander(L),\first[11,12])$ 
with  $(\lander(L),\first[8,9,10])$; 
and the set of couples
$R'= 
\{
\tuple{S',\emptyset},
\tuple{S'' \cup \{7\},\{d_1\}},
\tuple{S'',\emptyset}
\}$.
 
Suppose  we now perform two executions via action 
to obtain configuration $\tuple{\tnet{S''',\phi''},\II',R'',\DD}$, 
with $S''' = \{(10:\move(lan1)),7,B\}$ and formula $\phi''$ 
(resp. set $R''$) being $\phi'$ (resp. $R'$) updated to 
account for the executions. Finally, suppose that the battery 
level drops due to the execution of  top-level 
image processing action $(B:\monitor)$, which
makes $\move(lan1)$ no longer applicable. (We will show 
later how $\monitor$ could instead be absent in the initial task network
and arrive `dynamically' from the environment.)
Then, an execution via partial-replacement  will be performed on tasks in $\tnet{S''' \setminus \{(B:\monitor)\},\phi''}$
to obtain  configuration 
$\tuple{\tnet{\{1,2,3\},\phi'''},\II'',R''',\DD}$, 
where $\phi'''$ (resp. $\II''$) is the updated $\phi''$ (resp. $\II'$), and the set
 $R'''= 
\{
\tuple{\{8,9,1,2,3,B\},\emptyset},
\tuple{\{8,9,1,2,3\},\emptyset}
\}$.

\section{Properties of the Formalism}

In this section, we discuss the properties of our formalism,
and in particular how it relates to 
HTN planning.

The properties 
are based on the definition
of an \emph{execution trace}, which formalises the consecutive
execution of a configuration---via reduction, replacement, or action---as in our running example. In what follows, we use   
$\tau \in \exec(d,\II,R,\DD)$ to denote that a configuration $\tau$ is an execution via reduction, action, or replacement of a task network $d$ from a  state $\II$ relative to a set of reduction couples $R$ and a domain $\DD$.

\begin{definition}[Execution Trace]\label{def:trace}
Let $d = \tnet{S_d,\phi_d}$ be a task network, $\II$ a state, and $\DD$ a domain. 
An \defterm{execution trace} $\TT$ of $d$
from $\II$ relative to $\DD$ is any sequence of
configurations $\tau_1 \cdot\ldots\cdot \tau_k$, with
each $\tau_i = \tuple{d_i,\II_i,R_i,\DD}$, such
that $d_1 = d$; 
$\II_1 = \II$; $R_1 = \{\tuple{S_d,\emptyset}\}$;
and $\tau_{i+1} \in \exec(d_i,\II_i,R_i,\DD)$
for all $i \in [1,k-1]$. 
\end{definition}

We also need some auxiliary definitions related 
to execution traces. Consider  configuration 
$\tau_k$ above. First, if $S_k = \emptyset$ 
(where $d_k = \tnet{S_k,\phi_k}$), then the 
trace is \defterm{successful}. Second, if for 
all couples $\tuple{S,D} \in R_k$ we have that 
$S \cap \fst(d_k) \neq \emptyset$ entails both 
$\blocked(S,d_k,\II_k,\DD)$ and $D = \emptyset$, 
then the trace is \defterm{blocked}.
The following theorem states that if a trace is 
successful or blocked as we have  `syntactically' 
defined, then there is no way to `extend' the trace  
further, and vice versa.

\begin{proposition}
Let $\TT$ be an execution trace of a task network 
$d$ from a state $\II$ relative to a domain $\DD$. 
There exists an execution trace $\TT \cdot \tau_1 
\cdot\ldots\cdot \tau_{k}$, with $k>0$, of $d$ (from 
$\II$ relative to $\DD$) if and only if $\TT$ is 
neither successful nor blocked. The inverse also holds.
\end{proposition}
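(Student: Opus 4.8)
The plan is to reduce the statement to a one-step claim and then prove the resulting equivalence in both directions. First I would note that an extension $\TT \cdot \tau_1 \cdots \tau_k$ with $k>0$ exists if and only if $\TT$ admits a single further configuration: every longer extension opens with such a step, and any single step is itself a length-one extension. Writing $\tau_m = \tuple{d_m, \II_m, R_m, \DD}$ for the last configuration of $\TT$, with $d_m = \tnet{S_m,\phi_m}$, extendability is therefore equivalent to $\exec(d_m,\II_m,R_m,\DD) \neq \emptyset$, i.e. to at least one of an execution via reduction (Definition \ref{def:exec-red}), action (Definition \ref{def:exec-act}), or replacement (Definition \ref{def:exec-rep}) being applicable at $\tau_m$. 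The two directions are then argued separately, and the stated inverse (no extension exists if and only if $\TT$ is successful or blocked) is logically equivalent and follows by negating both sides, needing no separate treatment.

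For the direction from ``neither successful nor blocked'' to extendability, I would unpack the failure of blockedness. Since $\TT$ is not blocked, there is a couple $\tuple{S,D} \in R_m$ with $S \cap \fst(d_m) \neq \emptyset$ for which either $\blocked(S,d_m,\II_m,\DD)$ fails or $D \neq \emptyset$. If $\blocked$ fails, then by its definition some primary task $(n:t) \in S \cap \fst(d_m)$ is either an action applicable in $\II_m$, giving an execution via action, or a non-primitive task with $\relv(t,\DD) \neq \emptyset$, giving an execution via reduction. Otherwise $\blocked(S,d_m,\II_m,\DD)$ holds with $D \neq \emptyset$, which is exactly the precondition of an execution via replacement. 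So $\tau_m$ is extendable in every case. I would also remark that not being blocked already forces $\fst(d_m) \neq \emptyset$, hence $S_m \neq \emptyset$, so the hypothesis ``not successful'' is in fact subsumed here.

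For the converse, I must show that a successful or blocked trace cannot be extended. The successful case $S_m = \emptyset$ is immediate, since then $\fst(d_m) = \emptyset$, and each execution needs either a primary task or, for replacement, a couple with $S \cap \fst(d_m) \neq \emptyset$. The blocked case is the crux. Replacement is ruled out directly: any couple with $S \cap \fst(d_m) \neq \emptyset$ has $D = \emptyset$ by blockedness, so none meets the replacement precondition. Ruling out reduction and action, however, requires the structural invariant that every primary task of $d_m$ occurs in the first component of some couple of $R_m$. Granting it, if a primary $(n:t)$ were a reducible non-primitive task or an applicable action, the couple containing it would satisfy $S \cap \fst(d_m) \neq \emptyset$, so $\blocked(S,d_m,\II_m,\DD)$ would hold and force $\relv(t,\DD) = \emptyset$ (respectively $\II_m \not\models \Phi(n,d_m,\Op)$), contradicting reducibility (respectively applicability, reading $\not\models$ as ``no grounding satisfies'').

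The main obstacle is establishing this invariant, which I would prove by induction on the length of $\TT$ in the stronger form: $R_m$ contains a root couple $\tuple{S^*,\emptyset}$ whose first component contains that of every couple of $R_m$ and satisfies $S_{d_m} \subseteq S^*$. This at once yields the needed invariant, as every current task then lies in $S^*$ and hence in the root couple. The base case holds since $R_1 = \{\tuple{S_d,\emptyset}\}$ and $d_1 = \tnet{S_d,\phi_d}$. A reduction replaces $(n:t)$ by $S_n$ uniformly in every couple, so the former root, rewritten to $(S^* \setminus \{(n:t)\}) \cup S_n$, stays a maximal $D=\emptyset$ couple covering the new task set (Definition \ref{def:exec-red}); an action step leaves $R$ unchanged up to substitution --- executed actions being deliberately retained in couples --- and only shrinks the task set, so the root persists (Definitions \ref{def:actres}, \ref{def:exec-act}). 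The replacement step is delicate, because its update rewrites every couple $\tuple{S'',D''}$ with $S'' \supseteq S$ into $\tuple{(S'' \setminus S') \cup S_{new}, D''}$, where $S' = S \cap S_{d_m}$, and then deletes every couple still mentioning a label of $S'$. Here maximality does the work: since $S \subseteq S^*$, the root satisfies $S^* \supseteq S$ and is rewritten rather than left untouched, and as $(S^* \setminus S') \cup S_{new}$ mentions no label of $S'$ it survives the deletion; a short calculation then shows it still dominates every other couple and still covers $(S_{d_m} \setminus S') \cup S_{new} = S_{d_{m+1}}$. Checking that the deletion clause never strips a surviving task from this rewritten root is the one genuinely fiddly verification, and it is precisely what closes the blocked case.
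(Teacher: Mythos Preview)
Your argument is correct and more thorough than the paper's own. The paper disposes of the proposition in a brief sketch: for the forward direction it simply asserts that ``$\TT$ cannot be blocked follows from the fact that an execution via replacement, action, or reduction of $d_k$ is possible''; for the converse it claims that the only obstruction to extending a non-successful, non-blocked $\TT$ would be $S_k \neq \emptyset$ with $\fst(d_k) = \emptyset$, and rules this out via the standing assumption (Section~\ref{sec:prelim}) that constraint formulas contain no inconsistent ordering constraints. Your approach differs on two points. First, you make explicit what the paper leaves implicit: showing that a \emph{blocked} trace admits no execution via action or reduction requires knowing that every primary task of $d_m$ lies in the first component of some reduction couple, and you derive this from a stronger inductive invariant---the persistence of a maximal ``root'' couple $\tuple{S^*,\emptyset}$ with $S_{d_m} \subseteq S^*$---whose preservation under the rewrite-then-delete step of $\upd$ is the real work; the paper's one-line appeal tacitly relies on exactly this coverage property but never states or proves it. Second, you never invoke the ordering-consistency assumption, observing instead that $\fst(d_m) = \emptyset$ already renders the trace vacuously blocked, so that case cannot arise under the hypothesis ``not blocked''; your route thereby shows the paper's appeal to that assumption is not actually needed for this proposition.
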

\begin{proof}
If there exists a trace $\TT \cdot \tau_1 
\cdot\ldots\cdot \tau_{k}$ with $k>0$ then $\TT$ cannot 
be successful as its final task network $d_k=\tnet{S_k,\phi_k}$ 
would then not mention any tasks, and thus we cannot `extend'
it to $\tau_1$. The fact that $\TT$ cannot be blocked follows 
from the fact that an execution via replacement, action, or 
reduction of $d_k$ is possible.
Conversely, if $\TT$ is neither successful nor blocked, then
the only reason it would not be possible to `extend' it is if
$S_k \neq \emptyset$ but $\fst(d_k) = \emptyset$. However, 
this is only possible if a method-body exists where its constraint 
formula contains inconsistent (possibly negated) ordering constraints. Such method-bodies are not allowed due to our assumption in Section \ref{sec:prelim}.
The inverse of the theorem is proved similarly.
\end{proof}

The next three properties rely on traces that are 
free from certain kinds of execution. A trace 
$\TT = \tau_1 \cdot\ldots\cdot \tau_k$ is
\defterm{complete-replacement free} if there 
does not exist an index $i \in [1,k-1]$ such that 
$\tau_{i+1}$ is an execution via complete-replacement of $d_i$ from $\II_i$ relative to 
$R_i$ and $\DD$. We define  \emph{partial-replacement free} 
and  \emph{jump free} traces 
similarly.

Given any execution trace, the next theorem states 
that there is an equivalent one---in terms of 
actions performed---that is complete-replacement free. 
Intuitively, this is because, either with some `lookahead' 
mechanism or `luck', a complete-replacement can be avoided 
by choosing a different (or `correct') relevant method-body for a task. 
We define the \emph{actions performed}  by a trace
$\TT=\tau_1 \cdot\ldots\cdot \tau_k$ (or the pursued `solution'), denoted 
$\act(\TT)$, as follows.
Given an index $i \in [1,k-1]$, we first define
$\act(i) = t$ if $S_{i} \setminus S_{i+1} = \{(n:t)\}$ 
and $\tau_{i+1}$ is an execution via action of $d_{i}$ 
from $\II_{i}$ relative to $R_{i}$ and $\DD$; 
otherwise, we define $\act(i) = \epsilon$. Then,
$\act(\TT)$ is $\act(1) \cdot\ldots\cdot \act(k-1)$
with substitution $\theta$ of configuration
$\tau_k$ applied to the sequence.

\begin{theorem}\label{thm:1}
%
Let $\TT$ be an execution trace of a task network
$d$ from a state $\II$ relative to a domain $\DD$.
There exists a complete-replacement free execution 
trace $\TT'$ of $d$ from $\II$
relative to $\DD$ such that $\act(\TT) =\act(\TT')$ 
and $|\TT'| \leq |\TT|$.
\end{theorem}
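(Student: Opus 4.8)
The plan is to prove the statement by induction on the number of executions via complete-replacement occurring in $\TT$. If this number is zero then $\TT$ is already complete-replacement free and we take $\TT' = \TT$. Otherwise the goal is to exhibit a trace with strictly fewer complete-replacements, the same performed actions, and length no greater than $|\TT|$, and then appeal to the induction hypothesis; since each reduction step preserves $\act$ and does not increase the length, the two side-conditions will still hold for the final, complete-replacement free trace.

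For the inductive step I would single out the \emph{earliest} complete-replacement in $\TT$, say the transition $\tau_i \to \tau_{i+1}$, which replaces the pursued reduction $S$ of some task $(n:t)$ with an alternative method-body $d_{new}$. Let $\tau_j \to \tau_{j+1}$ be the earlier execution via reduction that first introduced this pursued reduction, using some method-body $d_n$ with $d_{new} \in \relv(t,\DD)$ still available as an alternative. The crucial observation is that a complete-replacement requires $S \subseteq S_d$, where $d_i = \tnet{S_d, \phi_d}$; since executing an action deletes it from the task network but never from a reduction couple, $S \subseteq S_d$ forces that \emph{no} action belonging to $S$ (or any of its sub-reductions) has been executed between steps $j{+}1$ and $i$. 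Consequently every intermediate step that touches $S$'s hierarchy must be an execution via reduction: it cannot be an action (none were executed), and it cannot be a replacement either, since a partial-replacement would require an already-executed descendant action and a complete one would contradict the minimality of $i$.

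With this I would build $\TT'$ as follows: keep the prefix $\tau_1 \cdots \tau_j$ unchanged; at step $j$ reduce $(n:t)$ with $d_{new}$ instead of $d_n$ (a legal execution via reduction, as $(n:t) \in \fst(d_j)$ and $d_{new}$ is relevant); \emph{drop} all the intermediate internal reductions and the complete-replacement itself; \emph{retain}, in their original order, all the \emph{external} steps (those not touching $S$'s hierarchy); and finally replay the suffix $\tau_{i+2} \cdots \tau_k$. Because at least the complete-replacement is dropped, $|\TT'| \le |\TT|$; and because reductions and replacements contribute $\epsilon$ to $\act$ while the retained external actions and the replayed suffix are unchanged, $\act(\TT') = \act(\TT)$ up to the final substitution. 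Picking the earliest complete-replacement also guarantees that no complete-replacement is introduced in the modified prefix, so the count drops by exactly one.

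The hard part will be justifying that $\TT'$ is a genuine execution trace, i.e. that the modified prefix reaches a configuration from which the external steps and the suffix can be replayed. This reduces to a confluence argument with three ingredients. First, states agree, since only actions alter the state and the executed actions are unchanged. Second, the task networks agree: one must show that lifting an inherited constraint through the frontier expressions $\first[\cdot]$ and $\last[\cdot]$ of successive internal reductions and then through the label substitution of $\rep$ yields exactly the same constraint as the single lifting performed by $\redd(\cdot,n,d_{new})$, and, symmetrically, that realising the cross-constraints between $S$ and external tasks commutes with this replacement (note that an external action can never be ordered \emph{after} an unexecuted task of $S$, which sharply limits which cross-constraints can interact). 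Third, the reduction couples agree up to the harmless fact that in $\TT'$ the method-body $d_n$ is never consumed and hence remains as an extra alternative; since that alternative is never used in the replayed suffix, every replacement of $\TT$ stays enabled in $\TT'$. Verifying ingredient two in full — the equality of the two constraint formulas under the interleaving of internal reductions with external actions — is the most delicate point, and I would isolate it as a separate commutation lemma before assembling the induction.
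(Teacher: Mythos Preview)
Your proposal is correct and follows essentially the same route as the paper: locate the earliest complete-replacement, trace it back to the reduction step where the ``wrong'' method-body was chosen, swap in the ``right'' one there, drop the now-useless internal reductions together with the complete-replacement itself, replay the external steps and the suffix, and iterate. Your observation that $S\subseteq S_d$ forces no descendant action to have been executed (and hence no intermediate partial-replacement inside $S$'s hierarchy) is exactly the key structural fact the paper uses, and your confluence ingredients (state, constraint formula, reduction couples) mirror the paper's case analysis. The only presentational differences are that the paper phrases the outer argument as an explicit iteration rather than an induction on the replacement count, and that it handles the leftover alternative $d_n$ by reinserting it into the suffix's reduction couples rather than by arguing, as you do, that the extra element is harmless for replay; both treatments are equivalent.
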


\begin{proof}
Without loss of generality, we use a slightly modified version of Definition
\ref{def:exec-red} that stores also the 
unique task label that was reduced, i.e., we add tuple 
$\tuple{n,S_n,D \setminus \{d_n\}}$
to $R'$ instead of the one that is currently added
in the definition.
Then, given a tuple $\tau_i=\tuple{d_i,\II_i,R_i,\DD}$ 
occurring in the above execution trace $\TT = \tau_1 \cdot\ldots\cdot \tau_k$, with a tuple
$\tuple{n,S,D} \in R_i$, we say that the set $S'$ is 
an \emph{evolution} of $n$ (relative to the trace
and $i$),
denoted $S' \in \refn(n,\tau_1 \cdot\ldots\cdot \tau_k,i)$,
if $\tuple{n,S',D'} \in R_j$ for some $D'$ and 
$i \leq j \leq k$. 

Consider the smallest $0 < m < k$ such that $\tau_{m+1}$ 
(with each $\tau_i=\tuple{d_i,\II_i,R_i,\DD}$) is 
an execution via complete replacement of task network 
$d_m$ from $\II_m$ relative to $R_m$ and $\DD$.
If there is no such $\tau_{m+1}$ then the theorem
holds trivially; otherwise, $d_{m+1} = \rep(\bar{S},\bar{d}',d_m)$ 
for some $\tuple{\bar{n},\bar{S},D} \in R_m$ and $\bar{d}' \in D$.

Consider prefix $\tau_1 \cdot\ldots\cdot \tau_i$ with 
the smallest $i<m$ such that the `incorrect' reduction 
was performed at $i$, i.e., where $d_{i+1} = \tnet{S_{i+1},\phi_{i+1}}$ 
is an execution via reduction of $d_i$ from $\II_i$ relative to 
$R_i$ and $\DD$, and $\tuple{\bar{n},S^R_{i+1},D} \in R_{i+1}$ 
(for some $S^R_{i+1}$ and $D$) but 
$\tuple{\bar{n},S^R_{i},D} \not\in R_{i}$ (for any $S^R_{i}$), 
where $S^R_{i+1}$ is a set of `ancestors' of $\bar{S}$, 
i.e., $\bar{S} \in \refn(\bar{n}, \TT,i+1)$.

Let $d_{i+1} = \redd(d_i,\bar{n},\bar{d})$ for some 
$(\bar{n}:t) \in S_i$ and $\bar{d} \in \relv(t,\DD)$,
i.e., $\bar{d}$ is the `incorrect' reduction.
Suppose instead that the `correct' one 
was performed on $d_i$, i.e., let tuple 
$\tau_{i+1}' = \tuple{d_{i+1}',\II_{i+1}',R_{i+1}',\DD}$ 
be an execution via reduction
of $d_i$ from $\II_i$ relative to  $R_i$ and $\DD$ 
such that $d_{i+1}' =\redd(d_i,\bar{n},\bar{d}')$ where
 $\bar{d}'$ is from earlier. We now show that all  
executions performed from $\tau_{i+1}$ up to $\tau_m$ (which 
do not involve complete-replacements) can also be 
performed from $\tau_{i+1}'$.

Suppose that there is at least one such execution,
i.e., $i+1<m$. Then $\tau_{i+2}$ is an execution 
via reduction, partial-replacement or action of 
$d_{i+1}$ from $\II_{i+1}$ relative to $R_{i+1}$ and 
$\DD$. Let $\lab(S_{bef}) \subset \lab(S_{i+1})$ be the task 
that was executed or reduced, or the tasks that were 
replaced, i.e., the largest set such that
$\lab(S_{bef}) \cap \lab(S_{i+2}) = \emptyset$; in the case of 
an execution via reduction or partial-replacement, 
let $\lab(S_{aft}) \subset \lab(S_{i+2})$ be the new tasks, 
i.e., the largest set such that 
$\lab(S_{aft}) \cap \lab(S_{i+1}) = \emptyset$.
If $\lab(S_{bef}) \subset \lab(S_{i+1}')$, i.e., the 
execution is `relevant' to $d_{i+1}'$ and the execution
is not a reduction of some `descendant' of 
$S^R_{i+1}$,\footnote{The execution 
cannot be via complete- or partial-replacement 
of a descendant of $S^R_{i+1}$ either, as the first 
execution via complete-replacement of $\bar{S}$ 
happens at index $m$.}
we then show that there exists also a corresponding 
tuple $\tau_{i+2}'$
that 
is an execution via reduction, partial-replacement 
or action of $d_{i+1}'$ from $\II_{i+1}'$ relative to 
$R_{i+1}'$ and $\DD$, such that 
$\lab(S_{bef}) \cap \lab(S_{i+2}') = \emptyset$, and in the 
case of an execution via reduction or partial-replacement, $\lab(S_{aft}) \subset \lab(S_{i+2}')$ and 
$\lab(S_{aft}) \cap \lab(S_{i+1}') = \emptyset$.


There are two main cases to consider: an execution
via action and partial-replacement. 

In the case 
of an execution via partial-replacement, observe 
from Definition \ref{def:exec-rep} that
all tasks in $S_{bef}$ are blocked in $d_{i+1}$ 
from $\II_{i+1}$ relative to $\DD$. The same applies 
to $S_{bef}$ in $d_{i+1}'$ from $\II_{i+1}'$ relative 
to $\DD$ for the following two reasons. Consider 
any primitive task $(n':t') \in S_{bef}$ (which 
is not applicable in $\II_{i+1}$ relative to $d_{i+1}$).
First, observe from Definition \ref{def:red}
that $\phi_{i+1}$ and $\phi_{i+1}'$ are 
identical except for the tasks and constraints
that were introduced by the two different 
reductions of $d_i$ above. Second, observe
from Definition \ref{def:rel} that any constraint 
occurring in $\phi_{i+1}$ and $\phi_{i+1}'$ 
containing expression $\first[n',\ldots]$ is relevant 
to $n'$ irrespective of the other task labels
that occur in the expression. Similarly, any 
constraint occurring in $\phi_{i+1}$ and $\phi_{i+1}'$ 
containing expression $\last[n',\ldots]$ 
is not relevant to $n'$ irrespective of the 
other task labels that occur in the expression. 
The same applies when $n'$ does not occur 
within such expressions.

In the case of an execution via action, i.e.,
$S_{bef}=\{(n':t')\}$ for some $(n':t')$, 
$n'$ is applicable in $\II_{i+1}'=\II_{i+1}$ 
relative to both $d_{i+1}$ 
and $d_{i+1}'$ for the same reasons as above.

It is not difficult to see that the remaining
relevant executions performed from $d_{i+2}$ up
to $d_m$ (if any)---which do not involve complete-replacements---can also be performed from $d_{i+2}'$.
Let $\TT'=\tau_1 \cdot\ldots\cdot \tau_i \cdot 
\tau_{i+1}' \cdot\ldots\cdot \tau_j'$ be the trace 
corresponding to those remaining executions from 
$d_{i+1}'$, appended to the prefix that ends just
before the `incorrect' reduction.
Since the set $\bar{S}$ above represents the descendants of 
$\bar{n}$ (with all actions left in-tact), it follows 
from Definitions \ref{def:red} and \ref{def:repl} 
that $\tau_{j}' = \tau_{m+1}$.

If $m+1 = k$, then $\TT'$ is a complete-replacement
free execution trace of $d$ from $\II$ relative to 
$\DD$, and the theorem holds. Otherwise, we first
create an `adjusted' copy of the suffix from index $m+2$ 
of the original trace by inserting the `incorrect' 
task network $\bar{d}$ (which $\TT'$ did not use). 
Let the trace $\hat{\tau}_1 \cdot\ldots\cdot \hat{\tau}_x = 
\tau_{m+2} \cdot\ldots\cdot \tau_k$. Then,
we replace each tuple $\tuple{\bar{n},S,D}$
occurring in $\hat{\tau}_1 \cdot\ldots\cdot \hat{\tau}_x$
with tuple $\tuple{\bar{n},S,D \cup \{\bar{d}\}}$ to
obtain the new trace
$\TT''= \TT' \cdot \hat{\tau}_1 \cdot\ldots\cdot \hat{\tau}_x$.

Finally, we can now remove the first execution via 
complete-replacement from $\TT''$ as we did before 
with trace $\TT$, and 
then continue this process (a finite number of 
times) with the resulting traces, until one is 
obtained where there is no execution via 
complete-replacement.
\end{proof}

An equivalent complete-replacement free
trace may, however, unavoidably specify one
or more replacements that are jumps---where 
the smallest replaceable reduction couples 
were skipped. To see why this holds, 
consider once again our running example,
but suppose that the constraints associated
with $\neg \stormy$ do not exist in $\phi_4$
and $\phi_5$ in Figure \ref{fig:ex}.%
\footnote{One could imagine a setting where
\textit{(i)} $\neg \stormy$ should only be checked immediately before action $\move$, 
and \textit{(ii)} it is 
not undesirable to do actions $\calibrate$ (if $\neg \cali$ holds) and $\heat$, even if action $\move$ turns out to be non-applicable.}
Suppose also that after the first reduction 
(of task $A$),  task $6$ is reduced using 
method $m_4$ instead of $m_5$, which  means
that the complete-replacement in the previous 
example will not occur. The resulting 
set of reduction couples will then 
contain the couple $\tuple{S,\{d_5\}}$, with 
$S=\{8,9,10\}$, instead of the couple 
$\tuple{S,\emptyset}$ in the previous example
(after the complete-replacement was performed). 
Thus, after the two executions via action of 
tasks $8$ and $9$ as before, the subsequent 
partial-replacement must `skip' couple 
$\tuple{S,\{d_5\}}$, which is the smallest 
replaceable one, and `jump' to  couple 
$\tuple{ S \cup \{7\}, \{d_1\} }$ in order
to avoid performing $(11:\heat)$. Intuitively, 
the jump is needed to `mimic' 
the actions yielded by the trace 
depicted by the previous example, which 
considered $d_5$ but then removed it (via 
the complete-replacement) because it was
not applicable. This observation is stated
formally below.

\begin{proposition}
There exists a domain $\DD$, state $\II$, task
network $d$, and an execution trace $\TT$ of $d$ 
from $\II$ relative to $\DD$ such that any
complete-replacement free execution trace $\TT'$ of $d$ from 
$\II$ relative to $\DD$ is not jump free when
$\act(\TT) =\act(\TT')$.
\end{proposition}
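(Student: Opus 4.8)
The plan is to exhibit a concrete witness, namely the lightly modified running example discussed just before the statement, and to argue that every complete-replacement free trace reproducing one fixed action sequence is forced into a jump. Concretely, I would take the domain $\DD$ of Figure~\ref{fig:ex} with the two constraints involving $\neg\stormy$ deleted from $\phi_4$ and $\phi_5$, the state $\II=\{\raw,\didExp(loc1),\lander(lan1)\}$ (so that $\II\not\models\cali$), and the task network $d=\tnet{\{(A:\transmitData(loc1)),(B:\monitor)\},\true}$. For $\TT$ I would take the trace sketched before the statement: reduce $A$ with $m_2$, reduce $6$ with $m_5$, perform the forced complete-replacement of $\{11,12\}$ by $m_4$'s body $\{8,9,10\}$, execute $8$ and $9$, execute $B$, and finally partial-replace up to $m_1$ (which in $\TT$ is not a jump, because the complete-replacement has already consumed $d_5$ and left the couple $\tuple{\{8,9,10\},\emptyset}$ unreplaceable). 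Its action word is $\act(\TT)=\calibrate\cdot\heat\cdot\monitor\cdot\estabConn\cdot\extData\cdot\sendExtData\cdot\breakConn$.

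I would then invoke Theorem~\ref{thm:1} to know a complete-replacement free $\TT'$ with $\act(\TT')=\act(\TT)$ exists, and show that each such $\TT'$ must contain a jump. First I would pin down the reductions of $\TT'$: since $\calibrate$ and $\heat$ occur only inside $m_4$, and $m_4$ is reachable only through $m_2$, while reducing $6$ with $m_5$ would block task $11$ (as $\II\not\models\cali$) and thereby force a complete-replacement, $\TT'$ must reduce $A$ with $m_2$ and $6$ with $m_4$. Crucially, avoiding the complete-replacement means $\TT'$ produces the couple $\tuple{\{8,9,10\},\{d_5\}}$, i.e., with $m_5$'s body still available as a live alternative.

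Next I would argue that the ordering is then forced. As $\move$ (task $10$) never appears in $\act(\TT)$, the trace $\TT'$ must not execute it; but $10$ is applicable for as long as $\neg\stormy$ holds, so $\TT'$ can only avoid it by first executing $B=\monitor$ (necessarily after $8$ and $9$, to match the word), which asserts $\stormy$ and makes $10$ non-applicable. Using $\blocked$ I would check that no replacement is possible earlier, since while $10$ is applicable nothing is blocked, so a replacement is forced only at this point. There $\tuple{\{8,9,10\},\{d_5\}}$ is the unique smallest replaceable couple; replacing it (the jump-free choice) reinstates $m_5$'s body $\{11,12\}$, and since $\cali$ now holds and the $\neg\stormy$ guard has been removed, task $11=\heat$ is the unique primary task and is applicable, so its couple is not blocked, no further replacement is available, and $11$ must be executed. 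This yields a second $\heat$ and hence $\act(\TT')\neq\act(\TT)$. To keep the action word unchanged, $\TT'$ must therefore skip the smallest replaceable couple and replace $\tuple{\{8,9,10,7\},\{d_1\}}$ by $d_1$, which is by definition a jump.

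The main obstacle is the exhaustiveness of this forcing argument: I must rule out \emph{every} alternative complete-replacement free trace with the same actions, not merely the natural one. Concretely this means verifying, from the definitions of $\blocked$, $\smallest(R)$ and $\rep$, that (i) the reduction choices above are the only ones compatible both with $\act(\TT)$ and with being complete-replacement free, (ii) the replacement cannot be triggered before $B$ is executed (nothing is blocked while $\move$ is applicable), and (iii) after the $d_5$-replacement the interpreter is genuinely forced to perform the extra $\heat$ rather than replace again. Each of these is a short finite check in this small domain, but assembling them into a clean case analysis, so that the single surviving trace is seen to contain a jump, is where the care is needed.
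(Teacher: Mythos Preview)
Your proposal is correct and follows the same route as the paper: you use precisely the modified running example described immediately before the proposition (the domain of Figure~\ref{fig:ex} with the $\neg\stormy$ constraints removed from $\phi_4$ and $\phi_5$, the state lacking $\cali$, and the trace that goes through $m_2$, $m_5$, a complete-replacement to $m_4$, actions $8,9,B$, and a partial-replacement to $m_1$). The paper's own proof is the single line ``This follows from the example above,'' so your detailed case analysis---pinning down the forced reductions in $\TT'$, checking that no replacement is available before $B$ is executed, and verifying that the jump-free replacement by $d_5$ would make $11$ applicable and hence force an extra $\heat$---simply spells out what the paper leaves to the preceding informal discussion.
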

\begin{proof}
This follows from the example above.
\end{proof}

The next result makes the link concrete between our
\HTN\ acting formalism and \HTN\ planning. It states 
that the solution yielded
by any execution trace that is successful and free
from partial-replacements can also be yielded via 
HTN planning. Conversely, given any  
HTN planning solution, there exists such an execution 
trace that yields it. The trace must be free from partial-replacements because such behaviour is specific to BDI-style recovery from runtime failure.

\begin{theorem}
Let $\DD$ be a domain, $\II$ a state, and $d$ a 
task network. Then, $\sigma \in \sol(d,\II,\DD)$ 
if and only if there exists a partial-replacement 
free and successful execution trace $\TT$ of $d$ 
from $\II$ relative to $\DD$ such that 
$\sigma = \act(\TT)$.
\end{theorem}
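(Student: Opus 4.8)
The plan is to prove the biconditional by establishing each direction as a correspondence between the inductive structure of \HTN\ solutions (as given by $\sol_n$) and the structure of partial-replacement-free successful execution traces. The central intuition is that, absent partial-replacements, an execution trace does exactly what \HTN\ planning does---reduce non-primitive tasks via relevant methods and execute actions---except that acting and reduction are \emph{interleaved} rather than fully separated. So the crux is showing that this interleaving does not change the set of achievable action sequences: the constraints $\phi$ are threaded through reductions (Definition \ref{def:red}) and consumed at action time (Definition \ref{def:actres}) in a way that exactly mirrors the `all reductions, then one completion' discipline of $\comp$ and $\sol$.

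For the forward direction ($\Leftarrow$), I would take a partial-replacement-free successful trace $\TT$ and show $\act(\TT) \in \sol(d,\II,\DD)$. By Theorem \ref{thm:1} I may assume $\TT$ is also complete-replacement free (the equivalent trace preserves $\act(\TT)$), so $\TT$ consists solely of executions via reduction and via action. I would then argue that one can \emph{permute} $\TT$ into a canonical form in which all reductions precede all actions, yielding a fully reduced primitive task network $d^*$ reachable via $\redd^*$; the key lemma is that reductions and actions commute in the sense that reordering them does not affect applicability (because, by Definition \ref{def:red} and Definition \ref{def:rel}, the constraints relevant to an action are inherited identically regardless of when sibling reductions occur, and a reduction of one task does not alter the extracted literals $\rel_l(n',d)$ of an unrelated action $n'$). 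Once $d^*$ is obtained, I would show that the sequence of actions executed, together with the constraints consumed at each step, witnesses that $\act(\TT)$ is a total ordering of a ground instance of $d^*$ satisfying $\phi_{d^*}$; successfulness ($S_k = \emptyset$) guarantees every task was either reduced or executed, so $\act(\TT) \in \comp(d^*,\II,\DD) \subseteq \sol(d,\II,\DD)$.

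For the converse ($\Rightarrow$), given $\sigma \in \sol(d,\II,\DD)$, there is by definition an $n$ and a primitive task network $d^* \in \redd^*(d,\DD)$ (reached by finitely many reductions) with $\sigma \in \comp(d^*,\II,\DD)$. I would construct $\TT$ by first performing exactly those reductions---checking at each step that the task being reduced is primary (Definition of $\fst$), which I can always arrange by choosing an order consistent with $\phi^*$ (our no-inconsistent-ordering assumption guarantees such an order exists)---and then executing the actions in the order dictated by $\sigma$. The obligation here is to verify that each action is applicable at execution time, i.e.\ $\II_i \models \Phi(n,d_i,\Op)$: this follows because $\sigma$ satisfies $\phi_{d^*}$, so each before/relevant constraint holds in the state reached just before the corresponding action, and the `realised' constraints $\fin(n,d)$ removed by Definition \ref{def:actres} are precisely the ones $\sigma$ has already discharged. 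No replacement (complete or partial) is ever needed because every task is applicable along $\sigma$, so the constructed trace is partial-replacement free, and it is successful since $\comp$ consumes every task.

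The main obstacle I expect is the commutation/permutation argument in the forward direction: I must show rigorously that interleaving reductions with actions yields the same final primitive network and the same applicability outcomes as performing all reductions first. The delicate point is the handling of \emph{propagated} between- and after-state-constraints (the $\last[\emptyset]$ and $n'$ terms in Definition \ref{def:rel}), since once an action executes, Definition \ref{def:actres} rewrites the constraint formula by removing $\fin(n,d)$ and stripping $n$ from $\last[]$ expressions---so an interleaved trace and a `reduce-first' trace reach \emph{syntactically different} formulas at intermediate steps even though they must agree on which action sequences are admissible. I would isolate this as a technical invariant: at any configuration, the set of literals extractable for each remaining action via $\rel_l$ is determined solely by the multiset of actions already executed and the original (unexecuted) constraints, independent of execution order---and then the permutation argument follows by induction on trace length.
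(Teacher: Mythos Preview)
Your ($\Leftarrow$) direction is close to the paper's approach: both invoke Theorem~\ref{thm:1} to eliminate complete-replacements and then argue that the remaining reductions yield a primitive network $d^*$ of which $\act(\TT)$ is a completion. One caution: your phrase ``permute $\TT$ into a canonical form'' is misleading, because the permuted sequence will generally \emph{not} be a valid execution trace (see below). What you actually need---and what the paper does---is to extract the reductions and reapply them in the \HTN-planning sense (Definition~\ref{def:red}, which has no primacy requirement), then verify by induction on prefixes of $\act(\TT)$ that the completion conditions hold. Your ``invariant about $\rel_l$'' is the right technical ingredient for that verification.

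Your ($\Rightarrow$) direction, however, has a genuine gap. You propose to build $\TT$ by performing \emph{all} reductions first and \emph{then} all actions, claiming you ``can always arrange'' for each reduced task to be primary by choosing an order consistent with $\phi^*$. This is false: Execution via Reduction (Definition~\ref{def:exec-red}) requires $(n{:}t)\in\fst(d)$, and a non-primitive task can be blocked from $\fst(d)$ by a \emph{primitive} predecessor, not just by other non-primitive tasks. Concretely, take $d=\tnet{\{(1{:}a),(2{:}t)\},\,(1\prec 2)}$ with $a$ an action and $t$ non-primitive. Then $2\in S_1$, so $(2{:}t)\notin\fst(d)$; no reordering of reductions helps, because the obstruction is the action $1$, which must be \emph{executed} (removing $(1\prec 2)$ as a realised constraint) before $2$ becomes primary. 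Your appeal to the no-inconsistent-ordering assumption does not address this, since that assumption concerns cycles among ordering constraints, not the interaction between actions and non-primitive tasks.

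The paper avoids this by \emph{interleaving}: it first reorders the \HTN\ reduction sequence $d_1\cdots d_m$ so that the reductions needed to expose the $i$-th action of $\sigma$ occur immediately before that action, and then builds $\TT$ inductively on prefixes of $\sigma$---perform just enough reductions to reveal the next action, execute it, repeat. The primacy of each reduced task then follows from the fact that the action it is ordered after (if any) has already been executed at that point in the trace. You will need this interleaved construction, or something equivalent, to close the gap.
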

\begin{proof} 
This proof relies on some auxiliary
functions. 
First, given a task label $n$ that appears
in a sequence of task networks $\vec{d}$, 
we define $f(n,\vec{d})$ 
(denoted $f(n)$ when $\vec{d}$ is obvious from 
context) as the index $i>1$ in $\vec{d}$ such 
that $n \in \lab(S_i)$ but
$n \not\in \lab(S_{i-1})$, where  
$\tnet{S_j,\phi_j}$ denotes the element at index
$j$ in $\vec{d}$; if there is no such index
$i$, we take $i=1$.

Second, given a task label $n$ that appears
in a trace $\TT$, we define 
$g(n,\TT)$ (denoted $g(n)$ when $\TT$ is 
obvious from context) as the index $i>0$ in 
$\TT$ such that $d_{i+1}$ is an execution
via action of $d_i$ from $\II_i$ relative to 
$R_i$ and $\DD$, and $n \in \lab(S_i)$ but 
$n \not\in \lab(S_{i+1})$, where each 
$\tau_j \in \TT$ is of the form 
$\tuple{d_j = \tnet{S_j,\phi_j},\II_j,R_j,\DD}$.

Finally, we sometimes assume that functions 
$\sol$ and $\act$ do not remove task labels, 
i.e., the latter, or an element of the former, 
can be of the form $(n_1:t_1) \cdot\ldots\cdot (n_m:t_m)$.

We shall now prove each direction of the theorem.\\

\noindent{\bf (\noindent\textbf{$\Longrightarrow$})}
Let us assume that $\sigma \in \sol(d,\II,\DD)$.
Then, from the definition of an \HTN\ solution, there 
exists a sequence $\vec{d}=d_1 \cdot\ldots\cdot d_m$, 
with $d=d_1$, such that
\textit{(i)}
for all $i \in [1,m-1]$,
$d_{i+1} = 
\redd(d_i=\tnet{S_i,\phi_i},n_i,\hat{d}_i)$
for some $\hat{d}_i \in \relv(t,\DD)$ and 
$(n:t) \in S_i$;  
\textit{(ii)}
$\sigma \in \comp(d_m,\II,\DD)$; 
\textit{(iii)}
$d_1 \cdot\ldots\cdot d_{f(n_1)}$, with 
$(n_1:t_1) \in \sigma$ (each $(n_j:t_j)$
denotes the element at index $j$ in $\sigma)$, 
is the shortest possible sequence of reductions
that yields $n_1$; and similarly, 
\textit{(iv)}
for any pair 
$(n_i:t_i), (n_{i+1}:t_{i+1}) \in \sigma$, the 
sequence $d_{x} \cdot\ldots\cdot d_{y}$ is the
shortest possible one such that $x=f(n_i)$ 
and $y=f(n_{i+1})$, unless $f(n_{i+1}) > f(n_i)$
(i.e., $n_{i+1}$ is yielded in the process
of yielding $n_{i}$), in which case $y=f(n_i)$. 
Since the order of reductions
does not matter \cite{ErolHN:TR94},
we can always obtain such a sequence $\vec{d}$
by `re-ordering' the reductions in a given sequence.

We now show that $\vec{d}$ also has a 
corresponding trace $\TT$ as above. To 
this end, we first prove the following
weaker theorem: there exists a partial-replacement 
free execution trace $\TT$ of $d$ from
$\II$ relative to $\DD$ such that $\sigma 
= \act(\TT)$.

We prove this by induction on the
length of the prefixes of $\sigma$.
For the base case, we consider
only the first action 
$(\bar{n}:\bar{t}) \in \sigma$.
Let $\tau_1 \cdot\ldots\cdot \tau_{k}$,
with each $\tau_i = 
\tuple{d_i=\tnet{S_i,\phi_i},\II,R_i,\DD}$,
be the trace of $d_1$
from $\II$ relative to $R_1$ and $\DD$ 
such that
\textit{(i)}
$k = f(\bar{n})$; 
\textit{(ii)}
each 
$d_{i+1} = \redd(d_i,n_i,\hat{d}_i)$
for all $i \in [1,k-1]$, where each $n_i$ 
and $\hat{d}_i$ are as above for 
$\vec{d}$; and 
\textit{(iii)}
$R_1=\{\tuple{S_1,\emptyset}\}$.
Since $(\bar{n}:\bar{t})$ is the first task 
in $\sigma$ (which is executable in $\II$), 
the precondition of $\bar{t}$ holds in $\II$,
and so do any `relevant' constraints in $\phi_m$ of the form 
$(l,\bar{n})$, $(l,\first[\bar{n},\ldots])$, 
or their negations. From this it follows that 
$\II \models \Phi(\phi_k,\bar{n},\Op)\theta$ also 
holds (Definition \ref{def:app}):%
\footnote{Substitution $\theta$ must be a subset 
of the one used to compute $\comp(d_m,\II,\DD)$.} 
any such constraints will also occur in $\phi_k$ 
(though a constraint in $\phi_k$ mentioning 
a $\first[N]$ expression with $\bar{n} \in N$
might have fewer elements in $N$ than 
the constraint's `evolution' in $\phi_m$)
and no more before state-constraints can occur in
$\phi_k$ that are relevant to $\bar{n}$.
The same applies for ordering constraints 
associated with $\bar{n}$.
%
%
Then, we can take trace $\TT = 
\tau_1 \cdot\ldots\cdot \tau_{k} \cdot 
\tuple{\tnet{S',\phi'},\II',R',\DD}$,
where the last tuple in the trace is 
an execution via action of $d_k$ from 
$\II$ (relative to $R_{k}$ and $\DD$)
such that $\bar{n} \in \lab(S_k)$ 
but $\bar{n} \not\in \lab(S')$.
Thus, the weaker theorem above holds 
in the base case.

For the induction hypothesis, we
assume that the weaker theorem holds 
for any prefix of $\sigma$ of length 
up to $\ell < |\sigma|$.

We now show that the weaker theorem also 
holds for
the prefix of $\sigma$ of length $\ell+1$. 
Let $(\bar{n}^{\ell}:\bar{t}^{\ell})$ and 
$(\bar{n}^{\ell+1}:\bar{t}^{\ell+1})$ be 
the actions at indices $\ell$ and 
$\ell+1$ in $\sigma$, respectively. Let
$j = f(\bar{n}^{\ell})$, and $k = f(\bar{n}^{\ell+1})$
if $f(\bar{n}^{\ell+1}) > f(\bar{n}^{\ell})$,
and $k=j$ otherwise.

From the induction hypothesis, we know 
that there exists a trace $\TT^{\ell}$ of $d$
from $\II$ relative to $R_1 = \{\tuple{S_1,\emptyset}\}$
and $\DD$ for the prefix of $\sigma$ of length $\ell$.
Let $\tuple{d^{\ell},\II^{\ell},R^{\ell},\DD}$
be the last configuration in $\TT^{\ell}$, and let 
$x = |\TT^{\ell}|-\ell$, i.e., $x$ is the
index in $\vec{d}$ where the next reduction%
---immediately after yielding $\bar{n}^{\ell}$---%
is performed.
Moroever, if $k=j$, let
$\TT^{\ell+1} = \TT^{\ell}$; otherwise (if 
$k > j$), let $\TT^{\ell+1} = 
\TT^{\ell} \cdot \tau_1' \cdot\ldots\cdot 
\tau_{k-j}'$, where for each $i \in [1,k-j-1]$:
\textit{(i)} $\tau_i'$ is of the form
$\tuple{d_i'=\tnet{S_i',\phi_i'},\II^{\ell},R'_i,\DD}$;
\textit{(ii)} 
$\tau_{i+1}'$ is an execution via reduction 
of $d'_i$ from $\II^{\ell}$ relative to $R'_i$ 
and $\DD$, and in particular, $d'_{i+1} = 
\redd(d'_i,n_{x+i},\hat{d}_{x+i})$;%
\footnote{Substitution
$\theta'$ must be applied to $\hat{d}_{x+i}$ 
to account for executions via action, where
given $(n_{x+i}:t_{x+i})$ from before,
$t' = t_{x+i}\theta'$ and 
$(n_{x+i}:t') \in S'_i$.
}
and
\textit{(iii)} 
$\tau_1'$ is an execution via reduction 
of $d^{\ell}$ from $\II^{\ell}$ relative
to $R^{\ell}$ and $\DD$, and in particular, 
$d'_1 = \redd(d^{\ell},n_x,\hat{d}_x)$.
(Each $n_j$ and $\hat{d}_j$ are as before for 
$\vec{d}$.)
 
Finally, we claim that we can take the trace
$\TT = \TT^{\ell+1} \cdot \tuple{d',\II',R',\DD}$, 
where 
$\tuple{d^{\ell+1},\II^{\ell+1},R^{\ell+1},\DD}$
is the last configuration in $\TT^{\ell+1}$
and $d'$ is an execution via action of 
$d^{\ell+1}$ from $\II^{\ell+1}$ relative 
to $R^{\ell+1}$ and $\DD$ such that 
$\bar{n}^{\ell+1}  
\in \lab(S^{\ell+1})$ but 
$\bar{n}^{\ell+1} 
\not\in \lab(S')$. This claim holds true
for the following three reasons.
First, since $\bar{n}^{\ell+1}$ 
is executable after $\bar{n}^{\ell}$ 
in $\sigma$, the preconditions and before constraints 
associated with $\bar{n}^{\ell+1}$ in $d^{\ell+1}$ 
also hold in $\II^{\ell+1}$, similarly to the base 
case above.
The second requirement for $\bar{n}^{\ell+1}$ to
be applicable in $\II^{\ell+1}$ is for any after 
state-constraint of the form $(\bar{n}^{\ell},l)$ occurring
in $d^{\ell+1}$ to hold in $\II^{\ell+1}$ (Definition 
\ref{def:rel}). This follows from the fact that
the same constraint in $\phi_m$ (up to substitution) 
holds in $\II^{\ell+1}$.
A similar argument applies to the 
case where constraints of the form 
$(\last[\bar{n}^{\ell},\ldots],l)$ 
and $(x,l,x')$ occur in $d^{\ell+1}$.
Third, by Definition \ref{def:actres},
once such a constraint is checked against
a state, it is immediately removed from 
the constraint formula (except possibly 
between state-constraints, which are 
removed once they are similarly
`satisfied'). 

Thus, $\TT$ is a partial-replacement free
execution trace of $d$ from $\II$ relative
to $\DD$ with $\act(\TT) = \sigma$. Trace
$\TT$ is a successful execution trace
because $d_m$ is a primitive
task network, and $\sigma$ represents exactly
the tasks occurring in $d_m$.\\

\noindent{\bf (\noindent\textbf{$\Longleftarrow$})}
Let us assume that there exists a partial-replacement 
free and successful execution trace $\TT$ of $d$ 
from $\II$ relative to $\DD$. We show then that
$\act(\TT) \in \sol(d,\II,\DD)$ also holds, that
is, there exists a sequence $\bar{d}_1 
\cdot\ldots\cdot \bar{d}_j$ such that
$d=\bar{d}_1$, 
$\act(\TT) \in \comp(\bar{d}_j,\II,\DD)$
and $\bar{d}_{i+1} = \redd(\bar{d}_i =
\tnet{\bar{S}_i,\bar{\phi}_i},\bar{n},\bar{d})$ 
for some $(\bar{n}:\bar{t}) \in \bar{S}_i$,
and $\bar{d} \in \relv(\bar{t},\DD)$ for
all $i \in [1,j-1]$.
In other words, we need to show that \cite{ErolHN:AMAI96}:
\textit{(i)} $\act(\TT)$ is a permutation of a ground
instance of $\bar{S}_j$ (with task labels removed), 
and that the following holds for any prefix of 
$\act(\TT)$, where for a given prefix of $\act(\TT)$:


\begin{enumerate}

\item[\textit{(ii)}]
the prefix is executable in $\II$, i.e.,
the first action in the prefix is executable
in $\II$, and any other action in the prefix
is executable in the state resulting from
executing the previous action; and 

\item[\textit{(iii)}]
any constraint in $\bar{\phi}_j$ that is 
relevant to an action or a pair of actions 
in the prefix is satisfied relative to the
prefix and $\II$.
\end{enumerate}

Let $\tau_1 \cdot\ldots\cdot \tau_m$ be the
complete-replacement free extraction of $\TT$
(Theorem \ref{thm:1}).
Observe that for each $i \in [1,m-1]$, either
$\tau_{i+1}$ is an execution via action of 
$d_i$ from $\II_i$ relative to $R_i$ and
$\DD$, or $d_{i+1} = \redd(d_i,n_i,\hat{d}_i)$
for some $\hat{d}_i \in \relv(t_i,\DD)$ 
and $(n_i:t_i) \in S_i$, where $\tau_i =
\tuple{d_i=\tnet{S_i,\phi_i},\II_i,R_i,\DD}$.
We prove parts \textit{(ii)} and \textit{(iii)} 
by induction on the lengths $k$ of the 
prefixes of the trace where $k = g(\bar{n})$ 
for some $\bar{n}$. 

For the base case, consider the smallest 
such prefix $k = g(\bar{n})$ for some 
$\bar{n}$, and
%
%
the sequence of task networks
$d_1 \cdot\ldots\cdot d_k$ corresponding
to the trace.
Then, since $(\bar{n}:\bar{t}) \in \fst(d_k)$ 
for some $\bar{t}$, and by Definition 
\ref{def:app}, the precondition of $\bar{t}$ 
holds in $\II$, and any (possibly negated) 
before state-constraint in $\phi_k$ that 
is associated with $\bar{n}$ is also satisfied 
in $\II$, parts \textit{(ii)} and \textit{(iii)}
above hold for prefix $(\bar{n}:\bar{t})$
of $\act(\TT)$, similarly 
to the inductive case from before.

For the induction hypothesis, let $\TT^{\ell}$ 
(resp. $\TT^{\ell+1}$) be any prefix of $\TT$ of length 
up to $\ell=g(\bar{n}^{\ell})$ (resp. $\ell+1$) for
some $\bar{n}^{\ell}$, with $k \leq \ell < m$.
(The step that yields $\tau_m$ must be an execution
via action, as $\TT$ is successful.)
Let
$\sigma^{\ell}$ be the corresponding subplan, i.e., $\sigma^{\ell}=\act(\TT^{\ell+1})$. Then,
we assume that
parts \textit{(ii)} and \textit{(iii)} above 
hold for prefix $\sigma^{\ell}$.

Let $c>0$ be the smallest number such that
$\ell+c=g(\bar{n}^{\ell+c})$ for some  
$\bar{n}^{\ell+c}$. We now show that parts
\textit{(ii)} and \textit{(iii)} also hold
for subplan $\sigma^{\ell+c}=\act(\TT^{\ell+c+1})$, 
where $\TT^{\ell+c+1}$ is the prefix of
$\TT$ of length $\ell+c+1$.

Let $\vec{d^{\ell}}$ be the sequence of task networks
corresponding to $\TT^{\ell}$ and let $d^{\ell}$ be 
the last task network in $\vec{d^{\ell}}$.
If $c=1$, then let $\vec{d^{\ell+c}}=\vec{d^{\ell}}$.
Otherwise, consider the sequence of `reductions'
$(n_{\ell+1},\hat{d}_{\ell+1}) \cdot\ldots\cdot 
(n_{\ell+c-1},\hat{d}_{\ell+c-1})$ performed in the trace 
$\tau_1 \cdot\ldots\cdot \tau_m$ above.
Let $\vec{d^{\ell+c}} = \vec{d^{\ell}} \cdot
d_1' \cdot\ldots\cdot d_{c-1}'$ where $d_1' = 
\redd(d^{\ell},n_{\ell+1},\hat{d}_{\ell+1})$
and $d_{i+1}' = 
\redd(d_i',n_{\ell+1+i},\hat{d}_{\ell+1+i})$ 
for all $i \in [1,c-2]$.%
\footnote{We must also apply to $\vec{d^{\ell+c}}$
the substitutions performed so far, i.e., element
$\theta$ in configuration $\tau_{\ell+c+1}$.} 
That is, we append a new sequence of task
networks to the one corresponding to the induction
hypothesis. Observe that parts \textit{(ii)} and 
\textit{(iii)} above hold for subplan
$\sigma^{\ell+c}=\sigma^{\ell} \cdot 
(\bar{n}^{\ell+c}:\bar{t}^{\ell+c})$
for the following main reasons.
First, any (possibly negated) constraint in 
the constraint formula in $d^{\ell}$ that is 
relevant to $\bar{n}^{\ell}$ and satisfied 
relative to 
$\sigma^{\ell}$ and $\II$ is still a constraint
occurring in $d_{c-1}'$ (though a possibly 
`evolved' one---e.g. with variables replaced 
by constants or more elements added to the
set of task labels in expressions of the form 
$\first[\bar{n}^{\ell},\ldots]$) that is relevant 
to $\bar{n}^{\ell}$ and satisfied relative to 
$\sigma^{\ell+c}$ and $\II$. 
Second, $\bar{t}^{\ell+c}$ is executable (in 
the state resulting from applying $\sigma^{\ell}$
to $\II$) by Definition \ref{def:exec-act}, and
any before state-constraints relevant to 
$\bar{n}^{\ell+c}$, and after state-constraints 
relevant to $\bar{n}^{\ell}$ are satisfied for 
the reasons discussed in the previous inductive case.
The case of 
between state-constraints is proved similarly.

Thus, there exists a sequence of task networks
$\bar{d}_1 \cdot\ldots\cdot \bar{d}_j$ as above,
such that \textit{(ii)} and \textit{(iii)}
hold for $\act(\TT)$. Finally, point 
\textit{(i)} above also holds for $\act(\TT)$ due to 
$\TT$ being successful.
\end{proof}

If a trace is not free 
from partial-replacements, it may not be
possible to obtain its solution via  HTN 
planning (given the same inputs). A similar 
property exists in the \CANPLAN\ semantics: BDI-style 
recovery from failure enables solutions that 
cannot be found using \CANPLAN's built-in 
HTN planning construct.

\begin{theorem}
There exists a domain $\DD$, state $\II$, task 
network $d$, and successful execution trace 
$\TT$ of $d$ from $\II$ relative to $\DD$
such that $\act(\TT) \not\in \sol(d,\II,\DD)$.
\end{theorem}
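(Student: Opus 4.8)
The plan is to exhibit the witnesses directly, reusing the partial-replacement scenario of the running example. Take $d = \tnet{\{(A\!:\!\transmitData(loc1)),(B\!:\!\monitor)\},\true}$ and an initial state $\II$ in which $\didExp(loc1)$, $\lander(lan1)$ and $\raw$ hold but $\cali$ and $\stormy$ do not, so that $\monitor$ is applicable and $m_4$ (rather than $m_5$) is the navigation method whose actions will be executable. First I would build an execution trace $\TT$ that: reduces $A$ via $m_2$; reduces task $6$ via $m_4$; executes the primary actions $8\!:\!\calibrate$ and $9\!:\!\heat$; and then executes the top-level action $B\!:\!\monitor$, which asserts $\stormy$. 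At that point $10\!:\!\move$ is no longer applicable, since $\move$ requires $\neg \stormy$.

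Next I would observe that the reduction couple tracking $A$ (whose pursued set is $\{8,9,10,7\}$ and whose remaining alternative is the method-body $d_1$ of $m_1$) is blocked in the sense of Definition~\ref{def:exec-rep}: after executing $8$ and $9$, its only primary task is the non-applicable action $10$. I would therefore perform an execution via partial-replacement of this couple with $d_1$, which is consistent because $m_1$ requires $\stormy$ before $1\!:\!\estabConn$ (constraint $(\stormy,1)$), and $\stormy$ now holds. I would then complete the trace by reducing $2\!:\!\sendData$ via $m_3$ and executing $1\!:\!\estabConn$, $4\!:\!\extData$, $5\!:\!\sendExtData$, and $3\!:\!\breakConn$ in an order respecting the constraints, leaving an empty task set. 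Hence $\TT$ is a successful execution trace, and $\act(\TT)$ contains the navigation actions $\calibrate$ and $\heat$ together with the communication actions $\estabConn$ and $\breakConn$.

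The crux is then to argue that $\act(\TT) \not\in \sol(d,\II,\DD)$. For this I would appeal to the structure of $\sol$: every solution is a completion of a primitive task network obtained by a sequence of reductions of $d$, and in each such sequence the single top-level task occurrence $\transmitData(loc1)$ is reduced by exactly one relevant method. The actions $\calibrate$ and $\heat$ can appear in a completion only as descendants of $\transmitData$ via $m_2$ (through $m_4$ or $m_5$), whereas $\estabConn$ and $\breakConn$ can appear only via $m_1$; since a completion cannot mix descendants of two distinct reductions of the same task occurrence, no HTN solution contains both. As $\act(\TT)$ contains all four actions, it is not an HTN solution, and the claim follows.

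The main obstacle I anticipate is twofold. The routine-but-fiddly part is verifying applicability across the state change---that $8$ and $9$ are applicable in $\II$ under $m_4$ (which needs $\neg \cali$ and $\neg \stormy$, both assumed), and that $1,4,5,3$ remain applicable after $\monitor$ has asserted $\stormy$. The essential part is the ``no-mixing'' argument, which is really a property of $\comp$: because each action (and each $\first[]$ or $\last[]$ expression) in a completion is traceable to a unique chain of method choices, the presence of an $m_1$-exclusive action rules out any $m_2$-descendant action. This is exactly what partial-replacement circumvents, since it discards the already-executed $m_4$ actions from the task network while $\act(\TT)$ still records them.
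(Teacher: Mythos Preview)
Your proposal is correct and follows essentially the same route as the paper: both use the running example, drive the trace to the point where $\monitor$ asserts $\stormy$ and blocks $\move$, perform a partial-replacement to $m_1$, and then argue that the resulting action sequence mixes descendants of two mutually exclusive reductions of the single occurrence of $\transmitData(loc1)$.

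The one minor difference is that the paper's trace first reduces task~$6$ via $m_5$, finds it inapplicable, and complete-replaces to $m_4$ (so that the later partial-replacement on the $A$-couple is not a jump, since the task-$6$ couple then has an empty alternative set). You instead reduce $6$ directly via $m_4$, which leaves $d_5$ as an untried alternative for task~$6$; your partial-replacement of the $A$-couple is therefore a jump in the sense of Definition~\ref{def:exec-rep}. This is perfectly legitimate---the formalism permits jumps---and it yields a slightly shorter witness trace. The ``no-mixing'' argument you give (that $\calibrate$ can only arise via $m_2$ and $\estabConn$ only via $m_1$, while a completion fixes a single reduction of $A$) is exactly the paper's, stated a bit more explicitly.
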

\begin{proof}
Consider the trace from our running
example, up to the point where an execution via partial-replacement is performed using method $m_1$. If the resulting
task network is successfully executed, we
get the solution corresponding to the sequence 
of action labels $8 \cdot 9 \cdot B \cdot 
1 \cdot 4 \cdot 5 \cdot 3$, which is not an \HTN\
solution; for example, an \HTN\ solution cannot contain
(the actions corresponding to) both $8$ and $1$.
\end{proof}

\section{An Algorithm for HTN Acting}
\label{sec:algo}

In this section we present the $\Exec$ algorithm 
for \HTN\ acting, which combines our formalism
with the processing of exogenous events.
In the algorithm we use $S_{nop}$ 
to denote the initial set of tasks $\{(0:\nop)\}$,
and $\topp(R)$ to denote the (unique) set $S$ of tasks in the 
`top level' reduction couple, given a set of reduction
couples $R$, i.e., the 
couple $\tuple{S \supseteq S_{nop},\emptyset}$. %
The algorithm takes the current state and  HTN 
domain as input and continuously performs  
 two main steps as follows.

\begin{algorithm}[t]
\small
\caption{$\Exec(\II,\DD)$}
\begin{algorithmic}[1]



\STATE $d \leftarrowtwo \tnet{S_d,\phi_d} =
   	    \tnet{S_{nop},\true}$ \COMMENT{Initial task network}


\STATE 
$\textbf{\bar{T}} \leftarrowtwo S_{nop}$;
$R \leftarrowtwo \{\tuple{S_{nop},\emptyset}\}$;
$\TTT \leftarrowtwo \tuple{d,\II,R,\DD}$

\WHILE{\true} \label{alg:outLoop}

   \STATE Set $\textbf{T}$ to the possibly empty set of newly observed tasks\label{alg:obs}


   \STATE $\textbf{T}' \leftarrowtwo \{(n:t) \mid t \in \textbf{T}, n$ is a unique task label$\}$

   \STATE $\textbf{\bar{T}} \leftarrowtwo \textbf{\bar{T}} \cup \textbf{T}'$\label{alg:add} \COMMENT{Store all newly observed tasks}

   \STATE $d' \leftarrowtwo \tnet{S_d \cup \textbf{T}', \phi_d}$\label{alg:add2}


   \STATE $R' \leftarrowtwo \big(R \setminus \{\tuple{\topp(R),\emptyset}\}\big) 
		\cup \{\tuple{\topp(R) \cup \textbf{T}', \emptyset}\}$\label{alg:add3}
		

   \IF{$\textbf{T} \neq \emptyset$}
   
    \STATE $\TTT \leftarrowtwo \TTT \cdot \tuple{d',\II,R',\DD}$\label{alg:append}
   \ENDIF
   
   \IF[Below Def. \ref{def:trace}]{$\TTT$ is neither successful nor blocked}\label{alg:traceIf}  


   	\STATE Set $\tuple{d,\II,R,\DD}$ to an element of  $\exec(d',\II,R',\DD)$\label{alg:exec}  

   	\STATE $\TTT \leftarrowtwo \TTT \cdot \tuple{d,\II,R,\DD}$\label{alg:traceadd} 

   \ENDIF

\ENDWHILE \label{alg:outloopend}


\end{algorithmic}
\end{algorithm}

\textbf{Step 1.}
The algorithm `processes' newly observed 
(external) tasks (if any) and inserts them as top-level tasks 
to a copy of the current configuration's task network 
$d$ and set of reduction couples $R$ (lines 
\ref{alg:obs} to \ref{alg:add3}), which are  
used to create the `next' configuration.

Such tasks could be the initial requests, 
for example to transfer 
the soil data and sample and then recharge, or
requests that arrive later, possibly while other
tasks are  being 
achieved. For example, task $\monitor$ could be a newly
observed task in the iteration following the execution 
of the actions corresponding to task labels $8$ and $9$ in method $m_4$ (as
opposed to $\monitor$ being an initial request).
%
A newly observed task could also represent an exogenous 
event  triggered by a change in the environment; 
for example, the arrival of primitive task $\stormact$ could represent the event 
that it has just become stormy, and it could have  the add-list 
$\{\storm\}$, which will be applied to the agent's 
state 
when the task is executed.
Given a domain $\DD = \tuple{\Op,\Me}$, we stipulate that any 
newly observed task $t$ is such that
$\pre(t,\Op) = \true$ if $t$ is primitive, and 
$\relv(t,\DD) \neq \emptyset$ otherwise.

\textbf{Step 2.}
If one or more new tasks were indeed observed,  the 
corresponding `next' configuration is appended (line 
\ref{alg:append}) to the current `dynamic' execution 
trace, or \emph{d-trace} $\TTT$. A d-trace is slightly
different to an execution trace 
(Definition \ref{def:trace}) in that the former may 
include tasks that are not just obtained by reduction
but also  dynamically from the environment.
%
%
If an execution via reduction, action, or replacement 
is possible from the last configuration in the d-trace
(line \ref{alg:traceIf}), 
the execution is then performed and the resulting configuration 
is appended to the trace (lines \ref{alg:exec} and 
\ref{alg:traceadd}).

The following theorem states that any d-trace produced by the 
algorithm is sound, i.e.,
%
any such d-trace, which may include 
new tasks observed over 
a number of iterations, 
is equivalent to some (standard) execution trace 
such that all of those 
tasks are present in the first configuration, but  their 
execution is `postponed'. 

\begin{theorem}
Let state $\II_{in}$ and domain $\DD$ 
be the inputs of algorithm $\Exec$. 
Let $\textbf{\bar{T}}_{\ref{alg:outloopend}}$ 
and $\TTT_{\ref{alg:outloopend}}$ 
be the values of variables 
$\textbf{\bar{T}}$ and 
$\TTT$, respectively,
on reaching line \ref{alg:outloopend} in the 
algorithm (after one or more iterations).
Then, $\act(\TTT_{\ref{alg:outloopend}}) = 
\act(\TT)$ for some execution trace $\TT$ of 
task network
$\tnet{\textbf{\bar{T}}_{\ref{alg:outloopend}},\true}$ 
from $\II_{in}$ relative to $\DD$. 
\end{theorem}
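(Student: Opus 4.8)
The plan is to obtain $\TT$ from the d-trace by replaying exactly its execution steps while discarding its task-arrival steps, the intuition being that a task inserted late by the algorithm can equally well sit in the initial task network and have its execution \emph{postponed} until the moment it would have arrived. Write $\textbf{\bar{T}}$ and $\TTT$ for the values $\textbf{\bar{T}}_{\ref{alg:outloopend}}$ and $\TTT_{\ref{alg:outloopend}}$ reached at line \ref{alg:outloopend}. First I would classify each configuration appended to $\TTT$ as either an \emph{arrival} step (appended at line \ref{alg:append}: it extends $S_d$ and the top-level couple $\topp(R)$ by a set $\textbf{T}'$ of fresh-labelled tasks, leaving $\phi_d$, $\II$ and all other couples untouched) or an \emph{execution} step (appended at line \ref{alg:traceadd}: an element of $\exec(d',\II,R',\DD)$, i.e. an execution via reduction, action, or replacement). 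I would then define $\TT$ to start from $\tuple{\tnet{\textbf{\bar{T}},\true},\II_{in},\{\tuple{\textbf{\bar{T}},\emptyset}\},\DD}$, as required by Definition \ref{def:trace}, and to perform, in order, precisely the execution steps of $\TTT$.

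The heart of the argument is an invariant relating the configuration $\tuple{\tnet{S,\phi},\II,R,\DD}$ reached after $k$ steps of $\TTT$ to the corresponding configuration $\tuple{\tnet{S^*,\phi^*},\II^*,R^*,\DD}$ of $\TT$. Letting $W \subseteq \textbf{\bar{T}}$ be the set of tasks that have not yet arrived in $\TTT$, I would maintain that $S^* = S \cup W$, that $\phi^* = \phi$ and $\II^* = \II$, and that $R^*$ is $R$ with its top-level couple $\tuple{\topp(R),\emptyset}$ enlarged to $\tuple{\topp(R)\cup W,\emptyset}$ while every other couple is identical. This holds at step $0$, where only $S_{nop}$ has arrived, $\phi=\true$, and $W = \textbf{\bar{T}} \setminus S_{nop}$. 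For an arrival step adding $\textbf{T}' \subseteq W$, the matching move in $\TT$ is to do nothing and shrink $W$ to $W \setminus \textbf{T}'$; since the arrived tasks carry no constraints and enter only the top-level couple, both $S^* = S \cup W$ and the top-level equation survive, and the step contributes $\epsilon$ to $\act(\TTT)$ because it removes no task by action.

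For an execution step I would show that the dormant tasks $W$ — fresh top-level tasks occurring in no constraint, whose couple carries $D=\emptyset$ — are inert for all three execution definitions on the active part of the network. The three facts to establish are: (i) $\fst(\tnet{S^*,\phi^*}) = \fst(\tnet{S,\phi}) \cup W$, since membership in the sets $S_1,S_2$ of the definition of $\fst$ turns only on $\phi=\phi^*$ and the fresh labels add no ordering constraints, so the task acted upon stays primary; (ii) $\rel(n,\tnet{S^*,\phi^*}) = \rel(n,\tnet{S,\phi})$, whence the applicability formula $\Phi$ of Definition \ref{def:app} is unchanged, because the labels of $W$ occur in no relevant constraint and, being present in $S^*$, cannot spuriously meet the side-condition $n'\notin\lab(S)$ of Definition \ref{def:rel}; and (iii) $\relv(t,\DD)$, reducibility, and the blocked status $\blocked$ of any sub-couple are unaffected. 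With these in hand, any reduction (Definition \ref{def:exec-red}), action (Definitions \ref{def:actres}, \ref{def:exec-act}), or replacement (Definition \ref{def:exec-rep}) fireable in $\TTT$ is fireable in $\TT$ on the same task, with identical state and constraint updates; and because the replaced set $S$ and $W$ are disjoint, the couple bookkeeping — the reduction rewrite of $(n:t)$ into $S_n$, and the $\upd$ rewrite of every superset couple, both of which touch the top-level couple's $W$-part only as a passive spectator — preserves the invariant. Since the two traces then execute the same actions in the same order and accumulate the same substitution $\theta$, we get $\act(\TT) = \act(\TTT)$, and $\TT$ is a valid execution trace of $\tnet{\textbf{\bar{T}},\true}$ that merely defers the still-dormant tasks.

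The step I expect to be the main obstacle is the replacement case, where I must verify that $\rep$ and $\upd$ genuinely commute with the insertion of $W$: that re-routing the top-level couple and every superset couple through the replaced set behaves identically whether or not the dormant tasks are present, and — crucially — that no partial- or complete-replacement can ever be triggered on a dormant task or one of its descendants, which is guaranteed only because the dormant tasks reside exclusively in the $D=\emptyset$ top-level couple that Definition \ref{def:exec-rep} never selects. Pinning down this commutation, together with the side-condition $n'\notin\lab(S)$ of Definition \ref{def:rel} (the one place where enlarging $S$ to $S^*$ could in principle alter which constraints count as relevant), is where the bookkeeping must be carried out most carefully.
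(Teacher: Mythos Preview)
Your proposal is correct and follows essentially the same approach as the paper: both arguments show that the dormant, not-yet-arrived tasks are inert for every execution step, so one may as well place them all in the initial configuration and replay only the execution steps. The paper phrases this as an iterative backward transformation of the d-trace (find the last arrival step, delete it, and propagate its tasks into every earlier $S_i$ and $\topp(R_i)$), whereas you phrase it as a forward construction with an explicit invariant $S^*=S\cup W$, $\phi^*=\phi$, $\II^*=\II$; your version is more careful than the paper's sketch, in particular your attention to the side-condition $n'\notin\lab(S)$ in Definition~\ref{def:rel} and to the commutation of $\rep$/$\upd$ with the dormant set $W$ goes beyond what the paper spells out.
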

\begin{proof}
D-trace $\TTT$ in the algorithm, which is incrementally built, is similar to an execution trace, except for \textit{(i)} the initial `empty'
task network of $\TTT$; and \textit{(ii)}  the task networks appended in line
\ref{alg:append} to account for newly observed tasks. We obtain 
an execution trace from $\TTT_{\ref{alg:outloopend}}$ as follows: 
take the last element $\tau_j \in \TTT_{\ref{alg:outloopend}}$ (each $\tau_k = \config{\tnet{S_k,\phi_k},\II_k,R_k,\DD}$) such that
$S_j \subset S_{j+1}$, i.e., there are newly observed tasks
in $S_{j+1}$; remove $\tau_{j+1}$ from $\TTT_{\ref{alg:outloopend}}$; 
 add the elements in $S_{j+1} \setminus S_j$ to each 
$S_i$ and $\topp(R_i)$, for $i \in [1,j]$; and repeat 
these steps on the resulting d-traces until an 
execution trace is obtained.

To see why `propagating' tasks up a d-trace as 
above does not make the latter invalid, consider a 
tuple $\tau_j$ (with $j>0$) in the original $\TTT_{\ref{alg:outloopend}}$ 
such that $S_j \subset S_{j+1}$. Let us now add any 
task $(n:t) \in S_{j+1} \setminus S_j$ to $S_i$ and
$\topp(R_i)$ for some $i<j$.
Since no constraints are added to $\phi_i$ 
and no existing ones in $\phi_i$ 
are modified,
 any other task 
$(n':t') \in S_i$ (with $n' \neq n$) that can (resp. cannot) 
be executed (from $\II_i$ relative to $R_i$ and $\DD$) 
when $(n:t)$ is not in $S_i$, still can (resp. cannot) be 
executed when $(n:t)$ is in $S_i$.
\end{proof}

\section{Discussion and Future Work}

While some implementations of \HTN\ acting frameworks do exist in the literature, this paper has, for the first time, provided a formal framework, by using the most general \HTN\ planning syntax and building on the core of its semantics. In doing so, we have carried over some of the advantages of the \HTN\ planning formalism, such as the ability to flexibly interleave the actions associated with a method \cite{deSilva:2017}, and to check a method's applicability \emph{immediately} before first executing an action. We have also compared \HTN\ acting to \HTN\ planning, and to a \BDI\ agent programming language.

We could now explore adding a `controlled' and `local' account of \HTN\ planning into \HTN\ acting. The result should be a similar semantics to \CANPLAN, which allows a \BDI\ agent to perform \HTN\ planning but only from user-specified points in a hierarchy. 
One approach might be, given a ground non-primitive task $t$, to use the construct $\Plan(t)$ to indicate that \HTN\ planning (as opposed to an arbitrary reduction) must be performed on $t$, and to define the new notion `execution via \HTN\ planning'. Given a current configuration $\tuple{d,\II,R,\DD}$ with task network $d = \tnet{\{(n:\Plan(t)),\ldots\},\phi}$, the definition would, for example, check whether there exists a ground instance $d_t'$ of a  method-body $d_t \in \relv(t,\DD)$ such that $\sol(d_t',\II,\DD) \neq \emptyset$ holds (defined in Section \ref{sec:back-htn}).

We could also investigate an improved semantics where a `tried' method-body is re-tried to achieve a task. Recall that when a relevant method-body is selected to reduce a task, and the body 
turns out to be `non-applicable' (i.e., it is unable
to execute any of its tasks) in the current state, we 
consider the body to have been `tried', in the same way that
we consider a body 
to have been tried if it fails (becomes blocked) 
\emph{during} execution, e.g. due to an environmental 
change. 
To enable re-trying a body that was not applicable 
in an earlier state, we should at the least be able to 
check whether that state is different to the current  
one (both of which are sets of ground atoms). 
Ideally, however,
we should also be able to quickly check (in polynomial 
time) whether the conditions that differ between the 
two states are likely to  make the method-body  
applicable. 
To enable re-trying a method-body that had \emph{failed}, 
we could explore 
techniques for analysing the conditions responsible for 
the failure in order to check that they no longer hold, 
as suggested in \cite{Ghallab:2016}. 
\bibliographystyle{plain}  
\bibliography{ms}
\end{document}